\DeclareFontFamily{OT1}{pzc}{}
\DeclareFontShape{OT1}{pzc}{m}{it}{<-> s * [1.10] pzcmi7t}{}
\DeclareMathAlphabet{\mathpzc}{OT1}{pzc}{m}{it}
\newcommand{\col}[1]{\text{col}\left\{#1\right\}}
\newcommand{\grad}[1]{\nabla_{#1^{\tran}} }
\newcommand{\ws}{\bm{{\scriptstyle\mathcal{W}}}}
\newcommand{\wso}{{\scriptstyle\mathcal{W}}}
\newcommand{\wse}{\widetilde{\ws}}
\newcommand{\w}{\bm{w}}
\newcommand{\we}{\widetilde{\w}}
\newcommand{\eqdef}{\:\overset{\Delta}{=}\:}
\DeclareMathOperator*{\argmin}{argmin}
\newcommand{\tran}{{\sf T}}
\newtheorem{theorem}{Theorem}
\newtheorem{assumption}{Assumption}
\newtheorem{lemma}{Lemma}
\newtheorem{definition}{Definition}
\definecolor{Gray}{gray}{0.8}
\definecolor{LightCyan}{rgb}{0.88,1,1}
\def\endthebibliography{%
	\def\@noitemerr{\@latex@warning{Empty `thebibliography' environment}}%
	\endlist
}
\begin{document}
%
\title{Enforcing Privacy in Distributed Learning with Performance Guarantees}
%
%
%

\author{Elsa~Rizk\IEEEauthorrefmark{1},~\IEEEmembership{Member,~IEEE,}
        Stefan~Vlaski\IEEEauthorrefmark{2},~\IEEEmembership{Member,~IEEE,}
        and~Ali~H. Sayed\IEEEauthorrefmark{1},~\IEEEmembership{Fellow,~IEEE.}
\thanks{\IEEEauthorrefmark{1}The authors are with the School of Engineering, École Polytechnique Fédérale de Lausanne (e-mail: \{elsa.rizk; ali.sayed\}@epfl.ch).
	\IEEEauthorrefmark{2}The author is with the Department of Electrical and Electronic Engineering, Imperial College London (e-mail: s.vlaski@imperial.ac.uk).
}
}

\maketitle

\begin{abstract}
We study the privatization of distributed learning and optimization strategies. We focus on differential privacy schemes
and study their effect on performance.
We show that the popular additive random perturbation scheme degrades performance because it is not well-tuned to the graph structure. For this reason, we exploit two alternative graph-homomorphic constructions and show that they improve performance while guaranteeing privacy. Moreover, contrary to most earlier studies, the gradient of the risks is not assumed to be bounded (a condition that rarely holds in practice; e.g., quadratic risk). We avoid this condition and still devise a differentially private scheme with high probability. We examine optimization and learning scenarios and illustrate the theoretical findings through simulations.

\end{abstract}

\begin{IEEEkeywords}
distributed learning, privatized learning, differential privacy, distributed optimization
\end{IEEEkeywords}

%
\IEEEpeerreviewmaketitle

\vspace{-0.2cm}
\section{Introduction}

\IEEEPARstart{D}{istributed} learning and optimization strategies are relevant in many contexts in real-world problems, such as in the design of robotic swarms for rescue missions, or the design of cloud computing services, or the exchange of information over social networks.  Even in scenarios where a centralized solution is possible, it is often preferable to rely on a distributed implementation for various reasons. For instance, the centralized solution tends to have high maintenance costs and is sensitive to the failure of the central processor. Agents may also be reluctant to share their data with a remote central processor due to privacy and safety considerations. The amount of data available at each agent may be significant in size, which makes it difficult to regularly transmit large amounts of data between the dispersed agents and the central processor. Distributed implementations offer an attractive and robust alternative. The architecture can tolerate the failure of individual agents since processing can continue to occur among the remaining agents. Also, agents are only required to share minimal processed information with their neighbours.

There exist several schemes for distributed optimization, which have been studied extensively in the literature. Among these schemes we list the incremental strategy \cite{Bert1997-inc, Blatt2007ACI, Catt2011-inc,lopes2007incremental,John2009-inc,Elias09-inc,rabbat2005quantized,Nedic01-inc}, consensus strategy \cite{DeGroot, Nedic09, xiao2004fast, Barbarossa07,Johansson2008SubgradientMA,Ren05,Boyd06,Kar09,srivastava2011distributed,hlinka2012likelihood}, and diffusion strategy \cite{chen2012diffusion,Tu12,chen2015learning,Chou12,Vlaski21,Lopes08}. The incremental algorithm requires a renumbering of the agents over a cyclic path to cover the entire graph. This is usually a challenging task since the determination of an appropriate cycle is an NP-hard problem and, moreover, the failure of any edge along the path turns the solution moot. 
The consensus and diffusion strategies avoid the need for a circular path over the graph. They rely on the local sharing of information among neighbouring agents. One main difference between both classes of strategies is that consensus updates are asymmetrical, where the starting point for the gradient-descent step is different from the location where the gradient vector is evaluated --- see expression \eqref{eq:conAlg2}. It was shown in several earlier works (see, e.g., \cite{Tu12, sayed2014adaptive, sayed2014adaptation}) that this asymmetry reduces the stability range of consensus implementations in comparison to diffusion solutions, especially in scenarios involving the need for continuous learning and adaptation. 

Now, one key aspect of distributed architectures is that they require agents to share information with their neighbours. This aspect raises an important privacy question about whether the information that is being shared over the edges in the graph can be intercepted. For instance, it is known that in algorithms that rely on gradient-descent updates, information leakage can occur through the sharing of the local gradients or the models that they estimate \cite{Hitaj2017,Melis2019,nasr2019comprehensive,Zhu2019}. This can be problematic when the network is dealing with classified or sensitive data such as healthcare or financial data. In such cases, attackers may be able to recover certain elements of an individual's personal information. There is no question that it is useful to pursue distributed strategies that guarantee a certain level of privacy. 

There exists several useful works in the literature that address privacy questions for distributed algorithms. These contributions rely mainly on two types of tools: differential privacy or cryptography. Cryptographic methods range from using secure aggregation to multiparty computation and homomorphic encryption \cite{bonawitz2016practical,gascon2017privacy,Mohassel2017SecureML,froelicher2021scalable,Niko2013}. Although these methods do not hinder the performance of the learned model, they add significant computational and communication overhead. 

On the other hand, differentially private methods mask the messages by adding some random noise \cite{dwork2014algorithmic, geyer2017differentially,hu2020personalized,triastcyn2019federated,truex2020ldp,wei2020federated,JayDLDP,LiDLDP,ZhuDPDL,pathak2010DP,vlaski2020graphhomomorphic}. They are simple to implement, but they introduce errors into the learned model and reduce the overall utility of the network. One main reason for this degradation is that the noise is often added at will \textit{without} accounting for the graph topology.

In this work, we focus on differential privacy since it is simpler to apply and more scalable. We explain how to adjust its application to match the graph topology, while ensuring privacy and performance guarantees. In particular, we examine the effect of two differentially private schemes: the traditional random perturbations scheme and a graph-homomorphic scheme. We establish the superiority of the latter over the former in the mean-squared-error (MSE) sense. We also devise a third scheme, called \textit{local} graph-homormphic processing, which fully removes the degrading effect of the noise on performance. These results apply to a broad class of distributed learning and optimization formulations.

%

\section{Problem Setup}
We consider a graph topology  with $P$ agents, labelled $p=1,2,\ldots, P$, as illustrated in Fig. \ref{fig:net}. The objective is for the agents to approach the minimizer of an aggregate convex optimization problem of the form:
\begin{equation}\label{eq:optProb}
	w^o \eqdef \argmin_{w \in \mathbb{R}^M} \frac{1}{P}\sum_{p=1}^P\left \{J_p(w) \eqdef  \frac{1}{N_p}\sum_{n=1}^{N_p} Q_p(w;x_{p,n}) \right\},
\end{equation}
where the risk function $J_p(\cdot)$ is associated with the $p$th agent and is defined as an empirical average of the corresponding loss function $Q_p(\cdot;\cdot)$ evaluated at the local data $\{x_{p,n}\}_{n=1}^{N_p}$. 
We associate two non-negative weights $a_{mp}$  and $a_{pm}$ with the edge linking neighbouring agents $m$ and $p$. In this notation, $a_{mp}$ is the weight used by agent $p$ to scale information arriving from $m$, and similarly for $a_{pm}$; it scales information from $p$ toward $m$. The neighbourhood of an agent $p$ is denoted by ${\cal N}_p$ and consists of all agents that are connected to $p$ by an edge. We assume that ${\cal N}_p$ includes agent $p$ as well.  
\begin{figure}[h!]
	\includegraphics[width =0.45\textwidth]{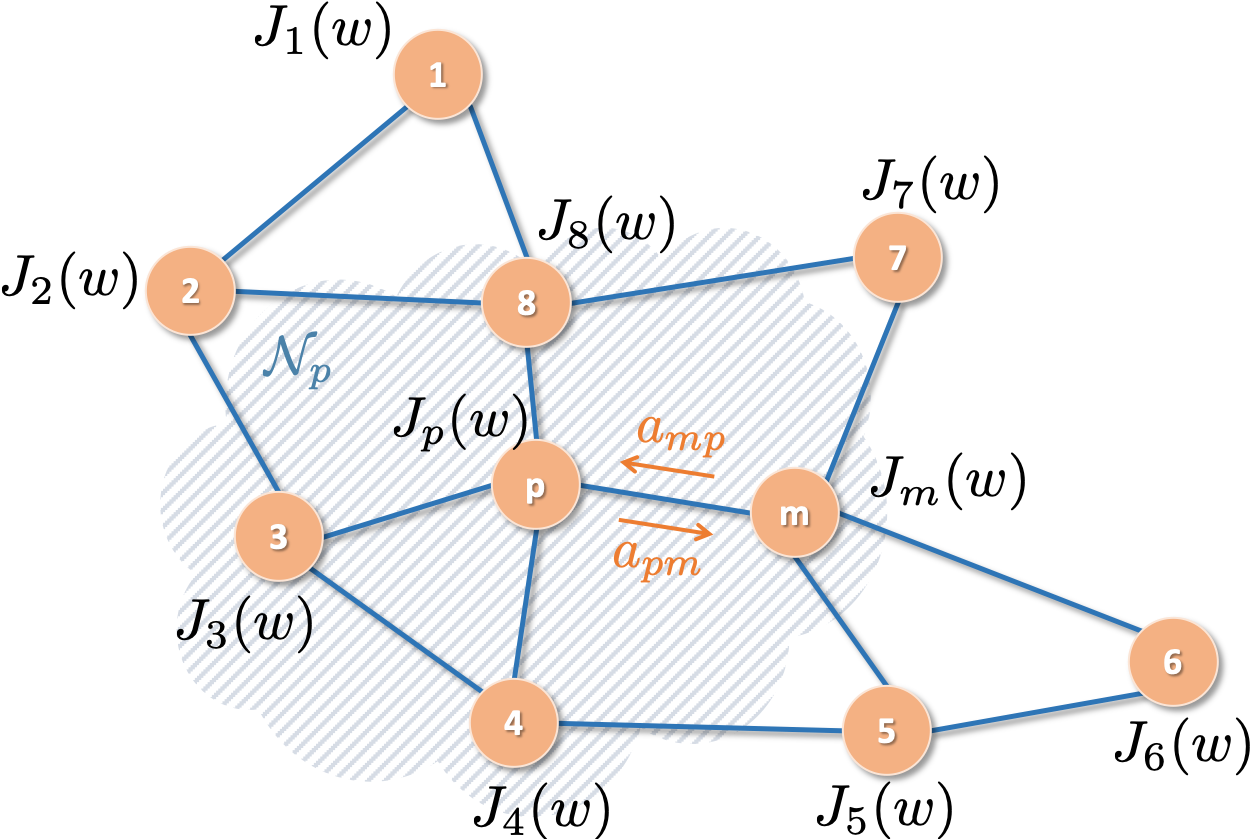}
	\caption{Illustration of a network of agents.}\label{fig:net}
\end{figure}

\subsection{Modeling Conditions}
We assume the individual risk functions $J_p(w)$ are strongly convex and the loss functions $Q_p(w;\cdot)$ have Lipschitz continuous gradients and are twice differentiable. These conditions are common in the study of distributed methods. Although the conditions can be relaxed and the results extended to broader scenarios (see, e.g., \cite{sayed2014adaptation,sayed2014adaptive,ying2018performance,vlaski2021distributed,sayed_2022}), it is sufficient for the purposes of this work to illustrate  the main ideas under these assumptions. 

\begin{assumption}[\textbf{Convexity and smoothness}]\label{ass:fct}
	The risks $J_{p}(\cdot)$ are $\nu-$strongly convex, and the losses $Q_{p}(\cdot;\cdot)$ are convex and twice differentiable, namely for some $\nu > 0$:
	\begin{align} \label{eq:assFctConv}
		&J_{p}(w_2) \geq  \: J_{p}(w_1) + \grad{w}J_{p}(w_1)(w_2-w_1) + \frac{\nu}{2}\Vert w_2 - w_1 \Vert^2,\\
		&Q_{p}(w_2;\cdot) \geq  \: Q_{p}(w_1;\cdot) + \grad{w}Q_{p}(w_1;\cdot) (w_2 - w_1).
	\end{align}
	The loss functions have $\delta-$Lipschitz continuous gradients, meaning there exists $\delta>0$ such that for any data point $x_{p,n}$:
	\begin{equation}\label{eq:assFctLip}
		\Vert \grad{w}Q_{p}(w_2;x_{p,n}) - \grad{w}Q_{p}(w_1;x_{p,n})\Vert \leq \delta \Vert w_2 - w_1\Vert.
	\end{equation}
	\qed
\end{assumption}
\noindent Since we assume the loss functions are twice differentiable, then the above strong-convexity and Lipschitz continuity conditions are equivalent to (see \cite{sayed2014adaptive,sayed2014adaptation,sayed_2022}): 
\begin{equation}
0 < \nu I \leq \grad{w}^2 J_p(w) \leq \delta I.
\end{equation}

We further assume that the graph topology is strongly connected. This means that there exist paths linking any arbitrary pair of agents $(m,p)$ in both directions and, moreover, at least one agent $p$ in the network has a self-loop with $a_{pp}>0$. In other words, at least one agent has some trust in its local information. The combination matrix $A = [a_{mp}]$ is usually left-stochastic meaning that its entries satisfy:
\begin{equation}
a_{mp}\geq0,\;\;\;\;\sum_{m\in{\cal N}_p} a_{mp} = 1.
\end{equation}
That is, the weights on edges connecting agents are nonnegative, and the entries on each column of $A$ add up to one. The strong connectedness of the graph translates into guaranteeing that $A$ is a primitive matrix. As a result, it follows from the Peron-Frobenius theorem \cite{sayed2014adaptation} that $A$ will have a single eigenvalue at one, while all other eigenvalues are strictly inside the unit circle. Moreover, an eigenvector $q$ will exist with positive entries $\{q_p\}$ adding up to one and satisfying: 
\begin{equation}
Aq=q,\;\;\;q_p>0,\;\;\;\mathds{1}^\tran q = 1.
\end{equation}
We refer to $q$ as the Peron eigenvector of $A$. Furthermore, it holds that $\rho\left(A -q\mathbbm{1}^\tran \right ) < 1,$ where $\rho(\cdot)$ denotes the spectral radius of its matrix argument.  

Next, let $w^o$ denote the global minimizer for \eqref{eq:optProb} and let $w_p^o$ denote the local minimizer for $J_p(\cdot)$. We assume that the difference between these global and local models is bounded since, otherwise, collaboration would not be beneficial and one would instead follow a different optimization approach such as multi-task learning \cite{nassif2020multitask}.

 To clarify this point further, we consider a simple example involving a quadratic loss. Assume the  data arriving at node $p$, denoted by $\bm{d}_p(n)$, is generated by some linear regression model under additive noise of the form:
\begin{equation}
\bm{d}_p(n) = \bm{u}_{p,n}^\tran  w^{\star} + \bm{o}_p(n),
\end{equation}
where $\bm{u}_{p,n}$ is the feature vector and $w^{\star}$ is the model. We can seek to estimate $w^{\star}$ by solving:
\begin{equation}
	\min_{w} \frac{1}{P}\sum_{p=1}^P \frac{1}{N_p} \sum_{n=1}^{N_p} ( \bm{d}_p(n)- \bm{u}_{p,n}^{\tran}w )^2.
\end{equation}
The global minimizer in this case is given by:
\begin{equation}\label{eq:regGlobMod}
	w^o = w^{\star} + \widehat{R}^{-1}_u \widehat{r}_{uo},
\end{equation} 
where:
\begin{align}
	\widehat{R}_u &\eqdef \frac{1}{P}\sum_{p=1}^P \left\{ \widehat{R}_{p,u}\eqdef  \frac{1}{N_{p}} \sum_{n=1}^{N_p} \bm{u}_{p,n}\bm{u}_{p,n}^\tran \right\}, \label{eq:dfRu}\\
	\widehat{r}_{uo} & \eqdef \frac{1}{P}\sum_{p=1}^P \left\{ \widehat{r}_{p,uo} \eqdef \frac{1}{N_{p}} \sum_{n=1}^{N_p} \bm{o}_{p}(n)\bm{u}_{p,n} \right\}, \label{eq:dfruv}
\end{align}
while the local minimizers of $J_p(w)$ are given by:
\begin{equation}\label{eq:regLocMod}
	w^o_p = w^{\star} + \widehat{R}^{-1}_{p,u} \widehat{r}_{p,uo}.
\end{equation}
Thus, the global model \eqref{eq:regGlobMod} can be written as a weighted average of the local models \eqref{eq:regLocMod}:
\begin{align}
	w^o = \frac{1}{P}\sum_{p=1}^P \widehat{R}_u^{-1}\widehat{R}_{p,u}w^o_p.
\end{align}
This implies that the global model is a mixture of the local models. Therefore, the bound imposed below on the model difference amounts to an assumption on how  different the distributions of the data across the agents are. This condition is weaker than a uniform bound on the difference between the gradients of the cost functions, which is more commonly assumed in the literature (see  \cite{wang2019adaptive,Li2020On}).

\begin{assumption}[\textbf{Model drifts}]\label{ass:mod}
	The distance of each local model $w_p^o$ to the global model $w^o$ is uniformly bounded, i.e., there exists $\xi\geq 0$ such that $\Vert w^o - w_p^o\Vert \leq \xi$. 
	
	\qed
\end{assumption}

\vspace{-0.5cm}
\section{Distributed Learning}

\subsection{Generalized Distributed Learning}
We focus on two main strategies: consensus and diffusion. The consensus strategy for solving (1) takes the form: 
\begin{align}\label{eq:conAlg}
	\bm{\psi}_{p,i-1} &= \sum_{m\in \mathcal{N}_{p}} a_{m p} \w_{m,i-1}, \\
	\w_{p,i} &= \bm{\psi}_{p,i-1} - \mu \widehat{\grad{w}J_p}(\w_{p,i-1}), \label{eq:conAlg2}
\end{align} 
where $\widehat{\grad{w}J_p}(\cdot)$ denotes a stochastic gradient \textit{approximation} for the true gradient of $J_p(\cdot)$. Usually, the approximation is taken as the gradient of the loss function, namely, $\grad{w} Q_p(\w_{p,i-1}, \bm{x}_{p,i})$. Here, the quantities $\{\bm{\psi}_{p,i},\w_{p,i}\}$ denote estimates for $w^{o}$ at node $p$ at time $i$. Observe that the gradient vector in \eqref{eq:conAlg2} is evaluated at the prior local model $\w_{p,i-1}$ and not at the intermediate model $\bm{\psi}_{p,i-1}$. The diffusion strategy, in turn, admits two related implementations known as combine-then-adapt (CTA) and adapt-then-combine (ATC). They differ by the order in which the calculations are performed with combination coming before adaptation in one case, and with the order reversed in the other case. The CTA diffusion strategy is described by:
\begin{align}
	\bm{\psi}_{p,i-1} &= \sum_{m\in \mathcal{ N}_p} a_{mp}\w_{m,i-1}, \\
	\w_{p,i} &= \bm{\psi}_{p,i-1} - \mu \widehat{\grad{w}J_p}(\bm{\psi}_{p,i-1}). \label{eq:CTAAlg}
\end{align}
Comparing with \eqref{eq:conAlg}--\eqref{eq:conAlg2}, observe now that the starting point in \eqref{eq:CTAAlg} for the gradient-descent step is the same as the point where the gradient vector is evaluated. Similarly, the ATC diffusion strategy is given by: 
\begin{align}\label{eq:ATCAlg}
	\bm{\psi}_{p,i} &= \w_{p,i-1} - \mu \widehat{\grad{w}J_p}(\w_{p,i-1}), \\
	\w_{p,i} &= \sum_{m \in \mathcal{N}_p} a_{mp}\bm{\psi}_{m,i}. 
\end{align}
The above three algorithms can be combined into a single general description as follows \cite{sayed2014adaptation}:
\begin{align} 
	\bm{\phi}_{p,i-1} &= \sum_{m \in \mathcal{N}_p} a_{1,mp} \w_{m,i-1}, \label{eq:preComb}\\
	\bm{\psi}_{p,i} &= \sum_{m \in \mathcal{ N}_p } a_{0,mp} \bm{\phi}_{m,i-1} - \mu \widehat{\grad{w}J_{p}}(\bm{\phi}_{p,i-1}) , \label{eq:update}\\
	\w_{p,i} &= \sum_{m \in \mathcal{N}_p} a_{2,mp} \bm{\psi}_{m,i}. \label{eq:postComb}
\end{align}
where we are introducing three combination matrices, $\{A_0,A_1,A_2\}$. By setting $A_0 = A$ and $A_1=A_2=I$, we obtain consensus, while $A_1=A$ and $A_0=A_2=I$ leads to CTA diffusion, and  $A_2 = A$ and $A_0=A_1=I$ leads to ATC diffusion. Other choices are possible.

\subsection{Privacy Learning}
We now examine differentially private algorithms to safeguard the privacy of the information that is shared among the agents. For illustration purposes, assume the data $\{x_{1,n}\}$ at agent $1$ is replaced by a different set $\{x_{1,n}'\}$. The algorithm will thus take a new trajectory, which we denote by $\{ \bm{\phi}'_{p,i-1}, \bm{\psi}'_{p,i}, \w'_{p,i}\}$. In a private implementation, an external observer should be oblivious to this change at agent $1$. Concretely, all we need to do is add noise to the messages that need privatization. Most commonly, noise with exponential distributions, such as Laplacian or Gaussian, is added \cite{dwork2014algorithmic}. Thus, we are motivated initially to consider a privatized distributed implementation of the following form:
\begin{align}
	\bm{\phi}_{p,i-1} &= \sum_{m\in \mathcal{N}_p} a_{1,mp} \left(\w_{m,i-1} + \bm{g}_{1,mp,i} \right), \label{eq:privPreComb} \\
	\bm{\psi}_{p,i} &= \sum_{m\in\mathcal{N}_p}a_{0,mp} \left( \bm{\phi}_{m,i-1} + \bm{g}_{0,mp,i}\right) - \mu \widehat{\grad{w}J_p}(\bm{\phi}_{p,i-1}), \label{eq:privUpdate}\\
	\w_{p,i} &= \sum_{m\in \mathcal{N}_p}a_{2,mp} \left( \bm{\psi}_{m,i} + \bm{g}_{2,mp,i}\right), \label{eq:privPostComb}
\end{align}
where the $\bm{g}_{j,mp,i}$ denote zero-mean Laplacian random noises for $j=0,1,2$ for every $m,p=1,2,\cdots,P$. For example, in \eqref{eq:privPreComb}, agent $m$ shares $\w_{m,i-1}$ with agent $p$ over the edge that links them. During this transmission, an amount of Laplacian noise $\bm{g}_{1,mp,i}$ is added. The subscript $mp$ is used to denote that this noise is for the directed communication from $m$ to $p$. Similarly, for the other noises. 

We next define differential privacy formally \cite{dwork2014algorithmic}, and show that the above algorithm is indeed differentially private.
\begin{definition}[\textbf{$\epsilon(i)-$Differential privacy}]\label{def:DP}
	We say that the algorithm given by \eqref{eq:privPreComb}$-$\eqref{eq:privPostComb} is $\epsilon(i)-$differentially private for
	agent $p$ at time $i$ if the following condition on the probabilities for observing the respective events holds on the joint distribution $f(\cdot)$ where the notation $\bm{y}_{p,j-1}$ represents any of the shared messages $\{\w_{p,j-1},  \bm{\phi}_{p,j}, \bm{\psi}_{p,j}\}$, $\bm{g}_{\cdot,pm,j}$ the corresponding added noise $\{\bm{g}_{1,pm,j}, \bm{g}_{0,pm,j}, \bm{g}_{2,pm,j}\}$: 
\vspace{-0.1cm}
\begin{align}\label{eq:DPcond}
	\frac{f \left( \Big\{\{ \bm{y}_{p,j-1} + \bm{g}_{\cdot,pm,j}\}_{m\in \mathcal{N}_p\setminus \{p\}} \Big\}_{j=1}^i  \right)}{f \left( \Big\{\{ \bm{y}'_{p,j-1} + \bm{g}_{\cdot,pm,j}\}_{m\in \mathcal{N}_p\setminus \{p\}} \Big\}_{j=1}^i \right) } &\leq e^{\epsilon(i)}.
\end{align} 
\qed
\end{definition}
\noindent The above bounds ensure that for small $\epsilon(i)$, the distributions of the original and modified trajectories are close to each other. This makes it difficult to infer information about the data at the agents since we cannot distinguish the trajectories of the algorithm for different combinations of participating agents. In other words, if agent 1 chooses to replace its original dataset by $\{x'_{1,n}\}$, then the resulting models $\{\w'_{p,j-1},  \bm{\phi}'_{p,j}, \bm{\psi}'_{p,j}\}$ are close enough in distribution to the original models $\{\w_{p,j-1},  \bm{\phi}_{p,j}, \bm{\psi}_{p,j}\}$, and an outside observer will not be able to conclude what dataset was used. The two model trajectories resulting from the use of the original and the alternative dataset are indistinguishable.

To show that algorithm \eqref{eq:privPreComb}$-$\eqref{eq:privPostComb} satisfies condition \eqref{eq:DPcond}, we first calculate the sensitivity of the algorithm. The sensitivity at time $i$ is defined in Appendix \ref{app:sensCalc} as the change in the trajectory of the algorithm if instead of using the original dataset, agent 1 uses the alternative dataset $\{x'_{1,n}\}$. In Appendix \ref{app:sensCalc} the sensitivity is shown to satisfy:
\begin{align}
	\Delta(i) & \eqdef \Vert \ws_i -\ws'_i\Vert \leq B + B' + \sqrt{P}\Vert w^o - w'^o\Vert,  
\end{align}
for some constants $B$ and $B'$ and with high probability. That is, it holds that:
\begin{align}\label{eq:probBd}
	&\mathbb{P}(	\Delta(i) \leq B + B' + \sqrt{P} \Vert w^o - w'^o\Vert ) 
	\notag \\
	&\geq \left ( 1- \frac{\kappa_2^2 \mathds{1}^\tran \Gamma^i \begin{bmatrix}
			\mathbb{E}	\Vert \bar{\ws}_0\Vert^2  \\ \mathbb{E}\Vert \check{\ws}_0\Vert^2 
		\end{bmatrix} + O(\mu)+O(\mu^{-1})}{B^2} \right)
	\notag 	\\
	& \times \left(1-  \frac{\kappa'^2_2 \mathds{1}^\tran \Gamma'^i \begin{bmatrix}
			\mathbb{E}	\Vert \bar{\ws}'_0\Vert^2  \\ \mathbb{E}\Vert \check{\ws}'_0\Vert^2 
		\end{bmatrix} + O(\mu) +O(\mu^{-1})}{B'^2} \right),
\end{align}
where $\ws_i \eqdef \col{\w_{p,i}}_{p=1}^P$, the model error at time zero is denoted by $\widetilde{\ws}_0 \eqdef \col{w^o-\w_{p,0}}_{p=1}^P$, and the variables $\{\bar{\ws}_0 ,\check{\ws}_0\}$ arise from the partitioning   $\mathcal{V}_{\theta}^\tran \widetilde{\wso}_0 = \text{col}\{ \bar{\ws}_0, \check{\ws}_0\}$ with the matrix $\Gamma$ and the constant $\kappa_2$ defined in Appendix \ref{app:MSEpriv}. Result \eqref{eq:probBd} means that the sensitivity $\Delta(i)$ is bounded with high probability. The bound constants $B$ and $B'$ are chosen by the user: larger values for $B$ and $B'$ result in higher probability of bounded sensitivity but, as shown in \eqref{eq:eps}, they result in a larger privacy bound. In other words, the values of $B$ and $B'$ can be controlled to balance the trade-off between the privacy level and the likelihood of bounded sensitivity. Next, if we denote the variance of the Laplacian noise $\bm{g}_{j,mp,i}$ by $\sigma_{g}^2$,
with $\bm{y}_{p,j-1} = \w_{p,j-1}$ the fraction in \eqref{eq:DPcond} can be bounded as follows with high probability:
\begin{align}
	&\frac{f \left( \Big\{\{ \w_{p,j-1} + \bm{g}_{0,pm,j}\}_{m\in \mathcal{N}_p\setminus \{p\}} \Big\}_{j=1}^i  \right)}{f \left( \Big\{\{ \w'_{p,j-1} + \bm{g}_{0,pm,j}\}_{m\in \mathcal{N}_p\setminus \{p\}} \Big\}_{j=1}^i \right) } \notag \\
	&\stackrel{(a)}{=} \prod_{j=1}^i\frac{ f\left( \{ \w_{p,j-1} + \bm{g}_{0,pm,j}\}_{m\in \mathcal{N}_p\setminus \{p\}}  | \mathcal{X}_{j-1}\right)}{f\left( \{ \w'_{p,j-1} + \bm{g}_{0,pm,j}\}_{m\in \mathcal{N}_p\setminus \{p\}}  | \mathcal{X}'_{j-1}\right)}
	\notag \\
	&\stackrel{(b)}{=} \prod_{j=1,m\in\mathcal{N}_p\setminus \{p\}}^i \frac{\exp \left(-\sqrt{2} \Vert\w_{p,j-1} + \bm{g}_{0,pm,j} \Vert/\sigma_g \right)}{ \exp \left(-\sqrt{2} \Vert\w'_{p,j-1} + \bm{g}_{0,pm,j} \Vert/\sigma_g\right)} \notag \\
	&\leq \exp \bigg( \frac{\sqrt{2}}{\sigma_g}\sum_{j=1, m\in\mathcal{N}_p\setminus \{p\}}^i \Vert\w_{p,j-1} -  \w'_{p,j-1}\Vert \bigg) \notag \\
	&\leq \exp \bigg( \frac{\sqrt{2}P}{\sigma_g}\sum_{j=1}^i \Vert\ws_{j-1} -  \ws'_{j-1}\Vert \bigg) ,
\end{align} 
where the first equality $(a)$ follows from applying Bayes' rule with $\mathcal{X}_{j-1} \eqdef \{\w_{p,j-1}\} \cup \Big\{\{ \w_{p,o-1} + \bm{g}_{0,pm,o}\}_{m\in \mathcal{N}_p\setminus \{p\}} \Big\}_{o=1}^{j-1}$, and the second equality $(b)$ follows from the independence of $\w_{p,j-1} + \bm{g}_{0,pm,j}$ for $m\in \mathcal{N}_p\setminus \{p\}$ conditioned on $\w_{p,j-1}$. 
A similar bound can be found for $\bm{y}_{p,j-1} \in \{\bm{\phi}_{p,j},\bm{\psi}_{p,j}\}$.

Thus, the level of privacy is defined by the following choice for $\epsilon(i)$ in terms of the running $\Delta (j)$ values:
\begin{align}
	\epsilon(i) &= \frac{\sqrt{2}P}{\sigma_{g}} \sum_{j=0}^{i-1} \Delta(j) 
	\leq \frac{\sqrt{2}P}{\sigma_{g}} (B + B' + \sqrt{P} \Vert w^o - w'^o\Vert )i.
	 \label{eq:eps}
\end{align}
These results show that in order to arrive at an $\epsilon(i)-$differentially private algorithm, it is sufficient to select the variance of the Laplacian noise to satisfy \eqref{eq:eps}. Expression \eqref{eq:eps} shows that $\epsilon(i)$ is a linear function of the iterations. This means that the process becomes less private at a rate no greater than a linear rate. It is important to note here that most earlier studies on differentially private schemes for multi-agent systems \cite{hu2020personalized,wei2020federated,JayDLDP} assume bounded gradients for the risk function. However, this condition is rarely satisfied in practice. For instance, even quadratic risks have unbounded gradients. For this reason, in our approach, we have avoided relying on this assumption. Instead, we are able to establish that differential privacy continues to hold with high probability. 

We still need to examine the effect of the added noises on performance. To do so, we introduce the extended model $\ws_i \eqdef  \col{\w_{p,i}}_{p=1}^P$ and write the three-step algorithm \eqref{eq:privPreComb}--\eqref{eq:privPostComb} using one single recursion as follows: 
\begin{align}\label{eq:extRec}
	\ws_i = &\mathcal{A}_2^\tran \mathcal{A}_0^\tran \mathcal{A}_1^\tran \ws_{i-1} - \mu\mathcal{A}_2^\tran \col{\widehat{\grad{w}J_p}(\bm{\phi}_{p,i-1})}_{p=1}^P \notag \\
	& + \mathcal{A}_2^\tran \mathcal{A}_0^\tran \text{diag}(\mathcal{A}_1^\tran \bm{\mathcal{G}}_{1,i}) + \mathcal{A}_2^\tran 	\text{diag}(\mathcal{A}_0^\tran\bm{\mathcal{G}}_{0,i})  \notag \\
		&+\text{diag}(\mathcal{A}_2^\tran \bm{\mathcal{G}}_{2,i}),
\end{align}
where for $j=0,1,2$, $\mathcal{A}_j \eqdef A_j \otimes I_M$, and $\bm{\mathcal{G}}_{j,i}$ is a matrix whose entries are the added noises $\bm{g}_{j,mp,i}$. We denote the model error by $\wse_i \eqdef \col{w^o - \w_{p,i}}_{p=1}^P$, and introduce the local gradient noise:
\begin{equation}\label{eq:gradNoise}
	\bm{s}_{p,i} \eqdef \widehat{\grad{w}J_{p}} (\bm{\phi}_{p,i-1}) - \grad{w}J_p(\bm{\phi}_{p,i-1}).
\end{equation}
It is customary to assume that this gradient noise process has zero mean and bounded second-order moment (see, e.g., \cite{sayed2014adaptation, sayed_2022}, where this property is actually shown to hold in many important cases of interest and similar arguments can be applied to the current case), namely:
\begin{equation}\label{eq:gradNoiseBD}
	\mathbb{E} \{ \Vert \bm{s}_{p,i}\Vert^2 | \mathcal{F}_{i-1}\} \leq \beta_{s,p}^2 \Vert \widetilde{\bm{\phi}}_{p,i-1}\Vert^2 + \sigma_{s,p}^2, 
\end{equation}
for some nonnegative constants $\beta_{s,p}^2$ and $\sigma_{s,p}^2$, and where the conditioning is taken over all past models $\mathcal{F}_{i-1} \eqdef \text{filtration}\{\w_{p,j}\}_{p=1,j=0}^{P,i-1}$. Then, using the extended gradient noise $\bm{s}_i \eqdef \col{\bm{s}_{p,i}}_{p=1}^P$, the error recursion corresponding to \eqref{eq:extRec} is given by:
\begin{align}\label{eq:extErrRec}
	\wse_i = &\mathcal{A}_2^\tran\mathcal{A}_0^\tran \mathcal{A}_1^\tran   \wse_{i-1} + \mu \mathcal{A}_2^\tran \col{\grad{w}J_p(\bm{\phi_{p,i-1}})}_{p=1}^P  \notag \\
	& + \mu \mathcal{A}_2^\tran \bm{s}_i - \mathcal{A}_2^\tran \mathcal{A}_0^\tran \text{diag}( \mathcal{A}_1^\tran \bm{\mathcal{G}}_{1,i}) - \mathcal{A}_2^\tran \text{diag}( \mathcal{A}_0^\tran\bm{\mathcal{G}}_{0,i}) 
	\notag \\
	& -\text{diag}( \mathcal{A}_2^\tran \bm{\mathcal{G}}_{2,i}).
\end{align}
Since $J_p(\cdot)$ are twice differentiable, we appeal to the mean-value theorem to express the gradient in the form \cite{sayed2014adaptation}:
\begin{align}\label{eq:MVT}	\grad{w}J_p(\bm{\phi}_{p,i-1}) = - \bm{H}_{p,i-1} \widetilde{\bm{\phi}}_{p,i-1} - \grad{w}J_p(w^o),
\end{align}
where:
\begin{equation}
	\bm{H}_{p,i-1} \eqdef \int_0^1 \grad{w}^2 J_p(w^o - t\bm{\phi}_{p,i-1})dt.
\end{equation}
Then, introducing the quantities:
\begin{align}
	\bm{\mathcal{B}}_{i-1} &\eqdef \mathcal{A}_2^\tran(\mathcal{A}_0^\tran - \mu \bm{\mathcal{H}}_{i-1})\mathcal{A}_1^\tran, \\
	\bm{\mathcal{H}}_{i-1} &\eqdef \text{diag}\{\bm{H}_{p,i-1}\}_{p=1}^P, \\
	b &\eqdef \col{\grad{w}J_p(w^o)}_{p=1}^P, \label{eq:defb}
\end{align}
we rewrite \eqref{eq:extErrRec} as:
\begin{align}\label{eq:recPrivDist}
	\wse_i &=  \bm{\mathcal{B}_{i-1}} \wse_{i-1} + \mu \mathcal{A}_2^\tran \bm{s}_i - \mu \mathcal{A}_2^\tran b  + \mu \mathcal{A}_2^\tran \bm{\mathcal{H}}_{i-1}\text{diag}(\mathcal{A}_1^\tran \bm{\mathcal{G}}_{1,i}) \notag \\
	& - \text{diag}(\mathcal{A}_2^\tran\bm{\mathcal{G}}_{2,i}) - \mathcal{A}_2^\tran \text{diag}(\mathcal{A}_0^\tran \bm{\mathcal{G}}_{0,i}) - \mathcal{A}_2^\tran \mathcal{A}_0^\tran \text{diag}(\mathcal{A}_1^\tran\bm{\mathcal{G}}_{1,i}).
\end{align}


We show in the next theorem that the weight-error size converges to the neighbourhood of zero, with the size of the neighbourhood determined by the step-size and the added noise variance. 
\begin{theorem}[\textbf{MSE convergence of privatized distributed learning}]\label{thrm:MSEpriv}
	Under assumptions  \ref{ass:fct} and \ref{ass:mod}, the distributed recursions \eqref{eq:privPreComb}$-$\eqref{eq:privPostComb} converge exponentially fast for a small enough step-size to a neighbourhood of the optimal model, i.e.:
	\begin{align} \label{eq:thrmMESpriv}
		& \limsup_{i\to \infty} \mathbb{E}\Vert \wse_i\Vert^2
		\leq 
		O(\mu)\sigma_s^2 + O(\mu) + (O(\mu^{-1})+O(\mu))\sigma_{g}^2.
	\end{align}
\end{theorem}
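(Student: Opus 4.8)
The plan is to turn the error recursion \eqref{eq:recPrivDist} into a convergent $2\times 2$ inequality recursion for $\mathbb{E}\Vert\wse_i\Vert^2$. I would square both sides, take expectations conditioned on $\mathcal{F}_{i-1}$, and use that the gradient noise $\bm{s}_i$ and the combination privacy noises $\bm{\mathcal{G}}_{0,i},\bm{\mathcal{G}}_{2,i}$ are zero-mean and, for the latter two, independent of the current driving operator $\bm{\mathcal{B}}_{i-1}$, so that their cross terms with $\bm{\mathcal{B}}_{i-1}\wse_{i-1}$ drop out. One subtlety is that the pre-combination noise $\bm{\mathcal{G}}_{1,i}$ enters $\bm{\phi}_{p,i-1}$ and hence $\bm{\mathcal{H}}_{i-1}$ inside $\bm{\mathcal{B}}_{i-1}$, so its cross terms do not cancel exactly; those I would bound using the uniform Hessian bounds and Young's inequality rather than orthogonality.

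The core of the argument, and the hard part, is to show that $\bm{\mathcal{B}}_{i-1} = \mathcal{A}_2^\tran(\mathcal{A}_0^\tran - \mu\bm{\mathcal{H}}_{i-1})\mathcal{A}_1^\tran$ acts as a contraction for small $\mu$. Assumption \ref{ass:fct} together with twice-differentiability gives $\nu I \leq \bm{H}_{p,i-1} \leq \delta I$ uniformly, so $\bm{\mathcal{H}}_{i-1}$ is uniformly bounded and positive definite. Since the $A_j$ are left-stochastic and, through $A$, primitive, a naive 2-norm bound on the block product is not contractive; instead I would pass to the Perron-eigenvector coordinate system $\mathcal{V}_\theta^\tran\widetilde{\wso} = \col{\bar{\ws},\check{\ws}}$ that splits the error into a centroid component along $q\otimes I_M$ and a disagreement component. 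In these coordinates the centroid contracts at rate $1 - O(\mu)$, fed by strong convexity $\nu$, while the disagreement contracts at a $\mu$-independent rate set by $\rho(A - q\mathbbm{1}^\tran) < 1$; coupling the two squared components yields a nonnegative $2\times 2$ matrix $\Gamma$ whose spectral radius is strictly below one for small enough $\mu$, which is what drives exponential convergence of the transient.

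For the inhomogeneous terms I would estimate term by term. The gradient noise is controlled by \eqref{eq:gradNoiseBD}: its state-dependent part $\beta_{s,p}^2\Vert\widetilde{\bm{\phi}}_{p,i-1}\Vert^2$ is absorbed into the slightly loosened contraction, leaving the absolute part at order $O(\mu^2)\sigma_s^2$. The deterministic bias $\mu\mathcal{A}_2^\tran b$ is handled via Assumption \ref{ass:mod}: because $\grad{w}J_p(w_p^o)=0$, the smoothness bound gives $\Vert\grad{w}J_p(w^o)\Vert \leq \delta\Vert w^o - w_p^o\Vert \leq \delta\xi$, hence $\Vert b\Vert^2 = O(1)$ and the bias enters at $O(\mu^2)$. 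The privacy-noise terms all carry variance $\sigma_g^2$, but at two different scales: the piece premultiplied by $\mu\bm{\mathcal{H}}_{i-1}$ enters at $O(\mu^2)\sigma_g^2$, whereas the pure combination-noise terms $\diag{\mathcal{A}_2^\tran\bm{\mathcal{G}}_{2,i}}$, $\mathcal{A}_2^\tran\diag{\mathcal{A}_0^\tran\bm{\mathcal{G}}_{0,i}}$, and $\mathcal{A}_2^\tran\mathcal{A}_0^\tran\diag{\mathcal{A}_1^\tran\bm{\mathcal{G}}_{1,i}}$ enter at $O(1)\sigma_g^2$.

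Finally I would take $\limsup_{i\to\infty}$ and invert $(I - \Gamma)$ in the $2\times 2$ system; this divides each forcing order by the smallest contraction gap, which is the centroid gap of order $O(\mu)$. Thus the $O(\mu^2)$ gradient-noise, bias, and $\mu$-scaled privacy contributions become $O(\mu)\sigma_s^2$, $O(\mu)$, and $O(\mu)\sigma_g^2$, while the $O(1)\sigma_g^2$ injection terms become $O(\mu^{-1})\sigma_g^2$, yielding exactly \eqref{eq:thrmMESpriv}. I expect the main obstacle to be the coordinate decomposition and the accompanying order bookkeeping — specifically, keeping the slow centroid gap and the fast disagreement gap separate so that the non-$\mu$-scaled injection noise is divided only by the $O(\mu)$ gap, which is what produces the $O(\mu^{-1})$ term, and handling the $\bm{\mathcal{G}}_{1,i}$ terms that do not decouple from $\bm{\mathcal{B}}_{i-1}$.
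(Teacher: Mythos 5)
Your proposal is correct and follows essentially the same route as the paper's proof: a Jordan/Perron change of coordinates $\mathcal{V}_\theta^\tran\wse_i = \col{\bar{\ws}_i,\check{\ws}_i}$ splitting the centroid (contracting at $1-O(\mu)$ via strong convexity) from the disagreement (contracting at a $\mu$-independent rate), a coupled $2\times 2$ inequality recursion with matrix $\Gamma$, absorption of the state-dependent gradient-noise term, and inversion of $(I-\Gamma)$ so that the unscaled injection noise divided by the $O(\mu)$ centroid gap produces the $O(\mu^{-1})\sigma_g^2$ term. If anything, you are slightly more careful than the paper on one point: the paper asserts all cross terms vanish by independence and then applies Jensen's inequality with factors of two to the $\bm{\mathcal{G}}_{1,i}$-dependent terms, whereas you correctly flag that $\bm{\mathcal{H}}_{i-1}$ depends on $\bm{\mathcal{G}}_{1,i}$ through $\bm{\phi}_{p,i-1}$ and handle this via Young's inequality, which is the honest fix for the same bookkeeping.
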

\begin{proof}
	See Appendix \ref{app:MSEpriv}.
\end{proof}
By examining the bound in \eqref{eq:thrmMESpriv} on the mean-square error (MSE), we observe that the noise variance $\sigma_g^2$ appears multiplied by a term on the order of $\mu^{-1}$, which is detrimental to performance when $\mu$ is small. Therefore, the traditional approach of adding Laplacian noise over the edges to ensre privacy is \textit{calamitous} to performance and needs to be improved. We describe next an alternative approach. 

\subsection{Graph-Homomorphic Noise}
The noises added to the communication links in the previous section did not take into account the graph topology. As a result, their effect gets magnified by $O(\mu^{-1})$ as shown in \eqref{eq:thrmMESpriv}. We now examine another strategy for adding noise, which relies on a graph-homomorphic construction from \cite{vlaski2020graphhomomorphic}. Specifically, the noises are now constructed to satisfy the following condition: 
\begin{align}\label{eq:GHP-cond}
	\sum_{p,m=1}^P q_p a_{mp} \bm{g}_{j,mp,i} &= 0, 
\end{align}
for $j = 0,1,2$, and where $q = \col{q_p}_{p=1}^P$ is the Perron eigenvector of $A_2^\tran A_0^\tran A_1^\tran$. This can be satisfied if we continue to choose zero-mean Laplacian noises $\bm{g}_{j,p,i}$ with variance $\sigma_{g}^2$ and then set:
\begin{equation}\label{eq:GHP-noise}
	\bm{g}_{j,pm,i} = \begin{cases}
		\frac{a_{pm}}{a_{mp}}	\bm{g}_{j,p,i}, & m \neq p \\
			- \frac{1-a_{j,pp}}{a_{j,pp}} \bm{g}_{j,p,i},  & m = p.
		\end{cases}
\end{equation} 
Condition \eqref{eq:GHP-cond}, along with construction \eqref{eq:GHP-noise}, ensure that the net effect of the additional noises cancel out over the entire graph during the local aggregation steps. We show in the next theorem that the MSE bound improves in this case. To see this, we first introduce the network centroid $\w_{c,i}$ and study its convergence under graph-homomorphic perturbations. Let:
\begin{align}\label{eq:netCent}
	\w_{c,i} &\eqdef \sum_{p=1}^P q_p \w_{p,i}
	\notag \\
	&= \w_{c,i-1}    - \mu \sum_{p=1}^P q_p\bm{s}_{p,i} 
	\notag \\
	&\quad  - \mu \sum_{p=1}^P q_p \grad{w}J_p\left(\sum_{m\in \mathcal{N}_p} a_{1,mp}(\w_{m,i-1} + \bm{g}_{1,mp,i})\right)
	\notag \\
	&\quad 
	+ \sum_{p,m} q_p \left( a_{1,mp} \bm{g}_{1,mp,i} + a_{0,mp}\bm{g}_{0,mp,i} + a_{2,mp}\bm{g}_{2,mp,i} \right). 
\end{align} 
Since we are using graph-homomorphic perturbations, the sum of the noise terms in the last line cancels out. We can therefore write the following error recursion:
\begin{align}\label{eq:netCent-err}
	\we_{c,i}  & = \we_{c,i-1} +  \mu (q^\tran \otimes I) \bm{s}_i +  \mu (q^\tran \otimes I) b
	\notag \\
	&\quad - \mu\sum_{p=1}^P q_p \bm{H}_{p,i-1} \sum_{m \in \mathcal{N}_p} a_{1,mp} (\we_{m,i-1} - \bm{g}_{1,mp,i} ).
\end{align}

Before stating the theorem on the MSE convergence, we bound the network disagreement defined as the average second-order moment of the difference between the local models and the centroid model.

\begin{lemma}[\textbf{Network disagreement}]\label{lem:netDis}
	The average deviation
	from the centroid is uniformly bounded during each iteration i, and, moreover, it holds asymptotically that:
	\begin{align}\label{eq:lemNetDis}
		&\limsup_{i\to\infty} \frac{1}{P}\sum_{p=1}^P \mathbb{E}\Vert \w_{p,i} - \w_{c,i}\Vert^2  
		\leq O(1)\sigma_g^2 + O(\mu)
	\end{align}
\end{lemma}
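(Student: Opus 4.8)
The plan is to show that the network disagreement is equivalent, up to $O(1)$ constants, to the squared norm of the component of $\wse_i$ lying in the subspace complementary to the Perron direction, and then to prove that this component obeys a stable recursion whose contraction factor is bounded away from one \emph{uniformly in} $\mu$. Concretely, since $\w_{c,i} = (q^\tran \otimes I)\ws_i$ and $\mathds{1}^\tran q = 1$, the stacked deviations satisfy $\col{\w_{p,i}-\w_{c,i}}_{p=1}^P = \big((I_P - \mathds{1} q^\tran)\otimes I\big)\ws_i = -\big((I_P - \mathds{1} q^\tran)\otimes I\big)\wse_i$, where $\mathds{1} q^\tran$ is idempotent. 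Hence it suffices to control the disagreement component $\check{\ws}_i$ extracted by the transformation $\mathcal{V}_\theta^\tran \wse_i = \col{\bar{\ws}_i,\check{\ws}_i}$ already introduced in Appendix \ref{app:MSEpriv}. I would first record the equivalence $\frac{1}{P}\sum_p \mathbb{E}\Vert\w_{p,i}-\w_{c,i}\Vert^2 \le c\,\mathbb{E}\Vert\check{\ws}_i\Vert^2$ with $c = O(1)$, since the projection and the transformation differ by bounded invertible maps.

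Next I would apply $\mathcal{V}_\theta^\tran$ to the error recursion \eqref{eq:recPrivDist}. In the transformed coordinates $\bm{\mathcal{B}}_{i-1}$ block-triangularizes: the $\check{\ws}$-block equals $J_\theta - O(\mu)$, where $J_\theta$ collects the Jordan structure of $A_2^\tran A_0^\tran A_1^\tran$ associated with its subunit eigenvalues, so that $\Vert J_\theta\Vert \le \rho\big(A_2^\tran A_0^\tran A_1^\tran - q\mathds{1}^\tran\big) + \epsilon$, a quantity $\rho_2<1$. The key structural point is that $\rho_2$ is determined solely by the spectral gap of the combination policy and is \emph{independent} of $\mu$; the Hessian term $\bm{\mathcal{H}}_{i-1}$ perturbs it only by $O(\mu)$, so the disagreement dynamics contract at rate $\rho_2 + O(\mu) < 1$. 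This is precisely what replaces the $O(\mu^{-1})$ amplification of Theorem \ref{thrm:MSEpriv} by an $O(1)$ factor for the homomorphic noise.

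I would then isolate the forcing terms acting on the $\check{\ws}$-recursion and bound their conditional second moments: (i) the graph-homomorphic noise terms do \emph{not} vanish in the disagreement subspace — cancellation \eqref{eq:GHP-cond} occurs only along $q$ — and hence contribute a variance of order $\sigma_g^2$; (ii) the gradient-noise term $\mu\mathcal{A}_2^\tran\bm{s}_i$ contributes $O(\mu^2)$, where the state-dependent part $\beta_{s,p}^2\Vert\widetilde{\bm{\phi}}_{p,i-1}\Vert^2$ of \eqref{eq:gradNoiseBD} is handled by bounding $\widetilde{\bm{\phi}}_{p,i-1}$ through $\wse_{i-1}$ and the injected noise; and (iii) the bias $\mu\mathcal{A}_2^\tran b$ is controlled by Assumption \ref{ass:mod} (model drift $\xi$) and contributes $O(\mu^2)$. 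Taking conditional expectations and using $\Vert a+b\Vert^2 \le (1+t)\Vert a\Vert^2 + (1+t^{-1})\Vert b\Vert^2$ with $t = O(\mu)$ to absorb cross terms, I obtain a scalar inequality of the form $\mathbb{E}\Vert\check{\ws}_i\Vert^2 \le (\rho_2 + O(\mu))\,\mathbb{E}\Vert\check{\ws}_{i-1}\Vert^2 + O(1)\sigma_g^2 + O(\mu)$. Iterating and taking $\limsup_{i\to\infty}$ gives the geometric-series value $\tfrac{O(1)\sigma_g^2 + O(\mu)}{1-\rho_2} = O(1)\sigma_g^2 + O(\mu)$, which is \eqref{eq:lemNetDis}; uniform-in-$i$ boundedness follows from the same recursion started at finite $\mathbb{E}\Vert\check{\ws}_0\Vert^2$.

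The step I expect to be the main obstacle is closing the recursion in the presence of the \emph{unbounded}, state-dependent gradient noise: the term $\beta_{s,p}^2\Vert\widetilde{\bm{\phi}}_{p,i-1}\Vert^2$ couples the disagreement dynamics back to the full weight error, so the $\check{\ws}$-recursion cannot be analyzed in isolation. I would resolve this by treating $\big(\mathbb{E}\Vert\bar{\ws}_i\Vert^2,\,\mathbb{E}\Vert\check{\ws}_i\Vert^2\big)$ as a coupled two-dimensional linear recursion driven by $\Gamma$ (as in Appendix \ref{app:MSEpriv}), and by invoking the already-established MSE bound of Theorem \ref{thrm:MSEpriv} to control $\mathbb{E}\Vert\widetilde{\bm{\phi}}_{p,i-1}\Vert^2$ and confirm that its contribution stays within the stated order. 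Verifying that the $\mu^{-1}$-type terms entering through $\mathbb{E}\Vert\wse\Vert^2$ are here multiplied by $\mu^2$ — and therefore degrade only to $O(\mu)$ rather than $O(\mu^{-1})$ — is the delicate bookkeeping that makes the homomorphic construction provably superior.
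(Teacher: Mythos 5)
Your proposal is correct and follows essentially the same route as the paper's proof in Appendix~\ref{app:netDis}: project the error onto the complement of the Perron direction via the Jordan decomposition, contract at the $\mu$-independent rate $\rho(J_{\theta})<1$, charge $O(1)\sigma_g^2$ to the noise components that do not cancel in the disagreement subspace, and close the state-dependent gradient-noise coupling by invoking the uniform MSE bound of Theorem~\ref{thrm:MSEpriv}, whose $O(\mu^{-1})$ terms enter multiplied by $\mu^2$ and thus degrade only to $O(\mu)$. Even your flagged ``main obstacle'' and its resolution coincide with the paper's treatment, which bounds $\mathbb{E}\Vert \we_{m,i-1}\Vert^2 \leq \mathbb{E}\Vert \wse_{i-1}\Vert^2$ through the $\Gamma$-driven recursion before substituting back into the disagreement inequality.
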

\begin{proof}
	See Appendix \ref{app:netDis}.
\end{proof}
Expression \eqref{eq:lemNetDis} shows that the local models will be at most $O(1)\sigma_g^2$ away from the centroid model. Thus, if the centroid model manages to converge to the optimal model $w^o$ with only a slight variation, then the local models will always be a constant, proportional to the noise variance $\sigma_g^2$, away from the true model. In the next theorem, we show that the added noise only alters the centroid model by an $O(1)$ factor.

\begin{theorem}[\textbf{MSE convergence of the network centroid}]\label{thrm:MSE-netCent}
	Under assumptions \ref{ass:fct} and \ref{ass:mod}, the network centroid defined in \eqref{eq:netCent} converges exponentially fast for a small enough step-size to a neighbourhood of the optimal model:
	\begin{align}
		\limsup_{i \to \infty} \mathbb{E}\Vert \we_{c,i}\Vert^2 \leq&  \:
		O(\mu)\sigma_s^2 + O(1)\sigma_g^2 + O(\mu^2).
	\end{align}
\end{theorem}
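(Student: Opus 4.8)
\emph{Approach.} The plan is to read the centroid error recursion \eqref{eq:netCent-err} as an inhomogeneous linear recursion $\we_{c,i} = (I - \mu\bar{\bm H}_{i-1})\we_{c,i-1} + \bm u_i$ with a contractive system matrix, and then propagate the mean-square energy to its steady state. The forcing $\bm u_i$ bundles four contributions of distinct character, and the whole argument hinges on tracking their powers of $\mu$ separately: the gradient noise $\mu(q^\tran\otimes I)\bm s_i$ and the Hessian-coupled homomorphic noise $\mu\sum_p q_p\bm H_{p,i-1}\sum_{m}a_{1,mp}\bm g_{1,mp,i}$ are zero-mean given $\mathcal F_{i-1}$, while the gradient bias $\mu(q^\tran\otimes I)b$ with $b$ as in \eqref{eq:defb} and the disagreement correction are $\mathcal F_{i-1}$-measurable. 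The decisive structural point, already invoked to kill the last line of \eqref{eq:netCent}, is that the graph-homomorphic construction \eqref{eq:GHP-noise}--\eqref{eq:GHP-cond} annihilates the \emph{direct} noise injection, so that $\sigma_g^2$ re-enters only through a residual that is multiplied by $\mu$ and tied to the network disagreement; this is precisely what downgrades its footprint from the $O(\mu^{-1})$ of Theorem \ref{thrm:MSEpriv} to $O(1)$.

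\emph{Extracting the contraction.} First I would substitute $\we_{m,i-1} = \we_{c,i-1} - (\w_{m,i-1} - \w_{c,i-1})$ inside the Hessian-weighted term of \eqref{eq:netCent-err}. Using column-stochasticity $\sum_{m}a_{1,mp} = 1$ together with $\sum_p q_p = 1$, the centroid-error part collapses to $-\mu\bar{\bm H}_{i-1}\we_{c,i-1}$ with $\bar{\bm H}_{i-1}\eqdef\sum_p q_p\bm H_{p,i-1}$, while the remainder is the disagreement correction. Since $\nu I\leq\grad{w}^2 J_p\leq\delta I$ by Assumption \ref{ass:fct}, each $\bm H_{p,i-1}$ and hence the convex combination $\bar{\bm H}_{i-1}$ obeys $\nu I\leq\bar{\bm H}_{i-1}\leq\delta I$, so $\Vert I - \mu\bar{\bm H}_{i-1}\Vert\leq 1-\mu\nu$ for $\mu\leq 1/\delta$. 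This fixes the exponential rate.

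\emph{Energy bookkeeping.} Next I would condition on $\mathcal F_{i-1}$ and expand $\mathbb{E}[\Vert\we_{c,i}\Vert^2\mid\mathcal F_{i-1}]$. The two zero-mean terms drop out of the cross product and add their variances: by \eqref{eq:gradNoiseBD} the gradient-noise variance is $O(\mu^2)(\sigma_s^2 + \beta^2\mathbb{E}\Vert\widetilde{\bm\phi}_{p,i-1}\Vert^2)$ and the homomorphic residual is $O(\mu^2)\sigma_g^2$; since they are divided by the rate $\mu\nu$ only once at steady state, they surface as $O(\mu)\sigma_s^2$ and $O(\mu)\sigma_g^2$. For the $\mathcal F_{i-1}$-measurable part I would use Young's inequality with weight $t = 1-\mu\nu$, which preserves the factor $1-\mu\nu$ and loads the remainder with $1/(\mu\nu)$; here Lemma \ref{lem:netDis} is the workhorse, bounding the disagreement correction (which carries a prefactor $\mu^2\delta^2$) by $\mu^2\delta^2(O(1)\sigma_g^2 + O(\mu))$, so that after the two divisions by the rate it leaves exactly the $O(1)\sigma_g^2$ term. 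The bias is controlled through Assumption \ref{ass:mod}: writing $\grad{w}J_p(w^o) = \grad{w}J_p(w^o) - \grad{w}J_p(w_p^o)$ and using $\delta$-Lipschitzness gives $\Vert\grad{w}J_p(w^o)\Vert\leq\delta\xi$, so this bounded drift enters only at higher order and is folded into the $O(\mu^2)$ remainder. The stray $\beta^2\Vert\widetilde{\bm\phi}\Vert^2$ term is handled by $\Vert\widetilde{\bm\phi}_{p,i-1}\Vert^2\leq 2\Vert\we_{c,i-1}\Vert^2 + 2\Vert\w_{p,i-1}-\w_{c,i-1}\Vert^2$, folding the first piece into the rate as an $O(\mu^2)$ correction and the second into the disagreement bound.

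\emph{Closing and main obstacle.} Collecting everything produces a scalar recursion $a_i\leq(1-\mu\nu + O(\mu^2))a_{i-1} + c_i$ for $a_i\eqdef\mathbb{E}\Vert\we_{c,i}\Vert^2$, whose limit satisfies $\limsup_i a_i\leq \limsup_i c_i/(\mu\nu - O(\mu^2))$; reading off each forcing after its division by the rate yields the advertised $O(\mu)\sigma_s^2 + O(1)\sigma_g^2 + O(\mu^2)$. The step I expect to be hardest is the $\mu$-accounting itself: the single nominal variance $\sigma_g^2$ must be shown to appear at $O(1)$ through the disagreement channel (a double division by the rate) yet only at $O(\mu)$ through the zero-mean residual (a single division), while the heterogeneity bias must be kept at higher order via Assumption \ref{ass:mod} rather than through the lossy $1/(\mu\nu)$ split. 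Getting these three separations right — and invoking Lemma \ref{lem:netDis} with the correct convex weights $q_p a_{1,mp}$ so that the contrast with Theorem \ref{thrm:MSEpriv} is genuine — is where the real work lies.
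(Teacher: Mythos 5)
Your overall route coincides with the paper's proof in Appendix~\ref{app:thrmMSE-netCent}: extract the contraction from the Hessian-weighted term (the paper uses $\alpha=\sqrt{1-2\nu\mu+\delta^2\mu^2}$, equivalent to your $\Vert I-\mu\bar{\bm H}_{i-1}\Vert\leq 1-\mu\nu$ for $\mu\leq 1/\delta$), add the conditional variances of the two zero-mean forcings at $O(\mu^2)$ so they surface as $O(\mu)\sigma_s^2+O(\mu)\sigma_g^2$ after one division by the rate, weight the $\mathcal{F}_{i-1}$-measurable disagreement correction by $\mu^2/(1-\alpha)=O(\mu)$ via Jensen/Young, invoke Lemma~\ref{lem:netDis} to bound it by $O(1)\sigma_g^2+O(\mu)$, and close the scalar recursion with a final division by $1-\gamma_c=O(\mu)$. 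Your handling of the $\beta_{s,p}^2\Vert\widetilde{\bm\phi}_{p,i-1}\Vert^2$ term --- folding the centroid part into the rate as an $O(\mu^2)$ perturbation and the rest into the disagreement channel --- is exactly what the paper does (its modified rate is $\alpha+2\beta_s^2\mu^2$). One cosmetic omission: your split of $\Vert\widetilde{\bm\phi}_{p,i-1}\Vert^2$ drops the $\bm{g}_{1,mp,i}$ component inside $\bm{\phi}_{p,i-1}$, which contributes a harmless extra $\beta_s^2\mu^2\sigma_g^2$, matching the paper's $(\beta_s^2+\delta^2)\mu^2\sigma_g^2$ term.

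The genuine gap is your treatment of the bias $\mu(q^\tran\otimes I)b$. The bound $\Vert\grad{w}J_p(w^o)\Vert\leq\delta\xi$ obtained from Assumption~\ref{ass:mod} and $\delta$-Lipschitzness is correct, but it cannot place this term in the $O(\mu^2)$ remainder: a persistent deterministic forcing of norm $O(\mu\xi)$ against a contraction $1-O(\mu)$ yields a steady-state contribution of order $\xi^2$, constant in $\mu$. Concretely, the noiseless recursion has fixed point approximately $\bar{\bm H}^{-1}(q^\tran\otimes I)b$, whose norm is controlled only by $\delta\xi/\nu$, and no choice of Young weights improves this ($\mu^2\Vert(q^\tran\otimes I)b\Vert^2$ loaded with $1/(\mu\nu)$ and then divided once more by the $O(\mu)$ rate gives $O(1)\xi^2$). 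The term is genuinely negligible only when $(q^\tran\otimes I)b=\sum_p q_p\grad{w}J_p(w^o)=0$ --- e.g., when $q=\mathds{1}/P$ (doubly stochastic combinations, since $w^o$ minimizes the unweighted average so $\sum_p\grad{w}J_p(w^o)=0$) --- or if the theorem is restated around the minimizer of the weighted aggregate $\sum_p q_p J_p(w)$. It is worth noting that the paper's own proof is silent on this point: the $b$ term present in \eqref{eq:netCent-err} simply does not appear in the first display of Appendix~\ref{app:thrmMSE-netCent}. So you correctly retained a term the paper glosses over, but your proposed mechanism for discharging it fails, and as written neither your argument nor the printed proof justifies the $O(\mu^2)$ (rather than $O(\xi^2)$) placement for general left-stochastic combination matrices.
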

\begin{proof}
	See Appendix \ref{app:thrmMSE-netCent}.
\end{proof}
Thus, the network centroid is at most $O(1)\sigma_g^2$ away from the true minimizer $w^o$, even with added noise, as opposed to $O(\mu^{-1})\sigma_g^2$. In Lemma \ref{lem:netDis}, we showed that the individual models $\w_{p,i}$ are $O(1)\sigma_{g}^2$ away from the centroid model. Thus, by using the graph-homomorphic perturbations \eqref{eq:GHP-cond}--\eqref{eq:GHP-noise}, the MSE is not inversely proportional to $\mu$ anymore, which is an improvement relative to \eqref{eq:thrmMESpriv}.

\subsection{Local Graph-Homomorphic Noise}
We explain how to improve on the $O(1)\sigma_g^2$ deviation and replace it by $O(\mu)\sigma_g^2$, by relying on the use of {\em local} graph-homomorphic noise. To do so, we construct the noises to satisfy the following alternative condition as opposed to \eqref{eq:GHP-cond}:
\begin{equation}\label{eq:locGHPcond}
	\sum_{m\in \mathcal{ N}_p} a_{mp} \bm{g}_{j,mp,i} = 0.
\end{equation}
Observe that we are requiring the sum of the noises to cancel out {\em locally}, rather than globally as required in the previous section. The neighbours of every agent $p$ must collaborate together to generate dependent random noises that will cancel out locally at $p$. The collaboration will occur through agent $p$, since a direct link might not exist amongst these neighbours. A similar problem exists in blockchain applications where the generation of a random number is required to occur in a distributed manner \cite{ginar_2019}.

We now devise a distributed scheme that leads to noises that satisfy condition \eqref{eq:locGHPcond}. For the sake of demonstration, we describe the protocol through an example. Thus, assume agent $1$ is connected to 5 agents labeled $2,3,4,5,6$ (Fig. \ref{fig:locGHP}, \textit{left}). Since the neighbours of agent $1$ need not be connected to each other through direct links, all communications will take place through agent $1$. In this subnetwork, we allow agent $1$ to be the orchestrator of the scheme. The first step is for agent $1$ to split its neighbours into two disjoint sets $\mathcal{N}_1 = \mathcal{N}_{+}\bigcup \mathcal{N}_{-}$. For example, we may collect the even numbered agents into $\mathcal{N}_{+}$, and the odd numbered agents into $\mathcal{N}_{-}$. Then, we allow every pair of agents from the two disjoint sets to agree on a noise value they will add to their message such that they will cancel out at agent $1$. We force agents from $\mathcal{N}_{-}$ to multiply the noise they will add to their messages by a negative sign. Therefore, agent $2$ will add to its message two noise terms, one generated with agent $3$ and another with agent $5$. We denote the noise term generated by agents $2$ and $3$ that will be sent to agent $1$ by $\bm{g}_{\{23\}1,i}$. Since the messages are scaled by the weights attributed by agent $1$ to its neighbours, the added noise must then be divided by the weights, i.e., the message sent by agent 2 to agent 1 is the original message $\w_{2,i}$ and the two generated noises by agent 2 with agents 3 and 5 scaled by the corresponding weight $a_{12}$:
\begin{equation}
	\w_{2,i} + \frac{ \bm{g}_{\{23\}1,i}}{a_{12} }+ \frac{\bm{g}_{\{25\}1,i}}{a_{12}}.
\end{equation}
However, this requires that agent 2 know the weight attributed to its messages by agent 1. Thus, agent 1 will have to make the weights public in case of a non-doubly stochastic combination matrix. The same process occurs between agents 4 and 6 with both 3 and 5. Then, the aggregate messages sent to agent 1 will end up being the sum of the unmasked weights:
\begin{align}
	&\sum_{k \in \mathcal{N}_+} a_{1k}\bigg( \w_{k,i} + \sum_{\ell \in \mathcal{N}_{-}} \frac{\bm{g}_{\{k\ell\}1,i}}{a_{1k}} \bigg) \notag \\
	&+ \sum_{k \in \mathcal{N}_-} a_{1k} \bigg( \w_{k,i} -  \sum_{\ell \in \mathcal{N}_{+}} \frac{\bm{g}_{\{\ell k\}1,i}}{a_{1k}} \bigg)
	= \sum_{k \in \mathcal{N}_1} a_{1k}\w_{k,i}.
\end{align}

We move to the method used to generate the pairwise noise terms $\bm{g}_{\{k\ell\}1,i}$. We rely on the Diffie-Helman key exchange protocol where each pair of agents shares a secret key that is used to generate the added noise. Given two agents, say 2 and 3, we assume they have individual secret keys $v_2$ and $v_3$, respectively. A known modulus $\pi$ and base $b$ is agreed upon amongst the agents. Then, agent 2 broadcasts its public key $V_2 = (b^{v_2} \mod \pi)$ and agent 3 does the same $V_3 = ( b^{v_3} \mod \pi)$. Agent 2 then calculates, $v_{23} = (V_3^{v_2} \mod \pi)= (b^{v_2 v_3} \mod \pi)$ which is the same as what agent 3 calculates $v_{23} = (V_2^{v_3} \mod p) = (b^{v_3 v_2} \mod \pi)$. Thus, the two agents now share a secret key $v_{23}$ only known to them. This secret key can then be used as the added noise to mask the messages, i.e., $\bm{g}_{\{23\}1,i} = v_{23}$. However, to make the process differentially private we need the resulting added noise to be Laplacian, $\text{Lap}(0,\sigma_g/\sqrt{2})$. In what follows, we describe a scheme, which we call the local graph-homomorphic processing scheme that ensures the added noise is Laplacian. An illustration of this process is found in the right subfigure of Fig. \ref{fig:locGHP}. 

\begin{figure}[h!]
	\includegraphics[width =0.5\textwidth]{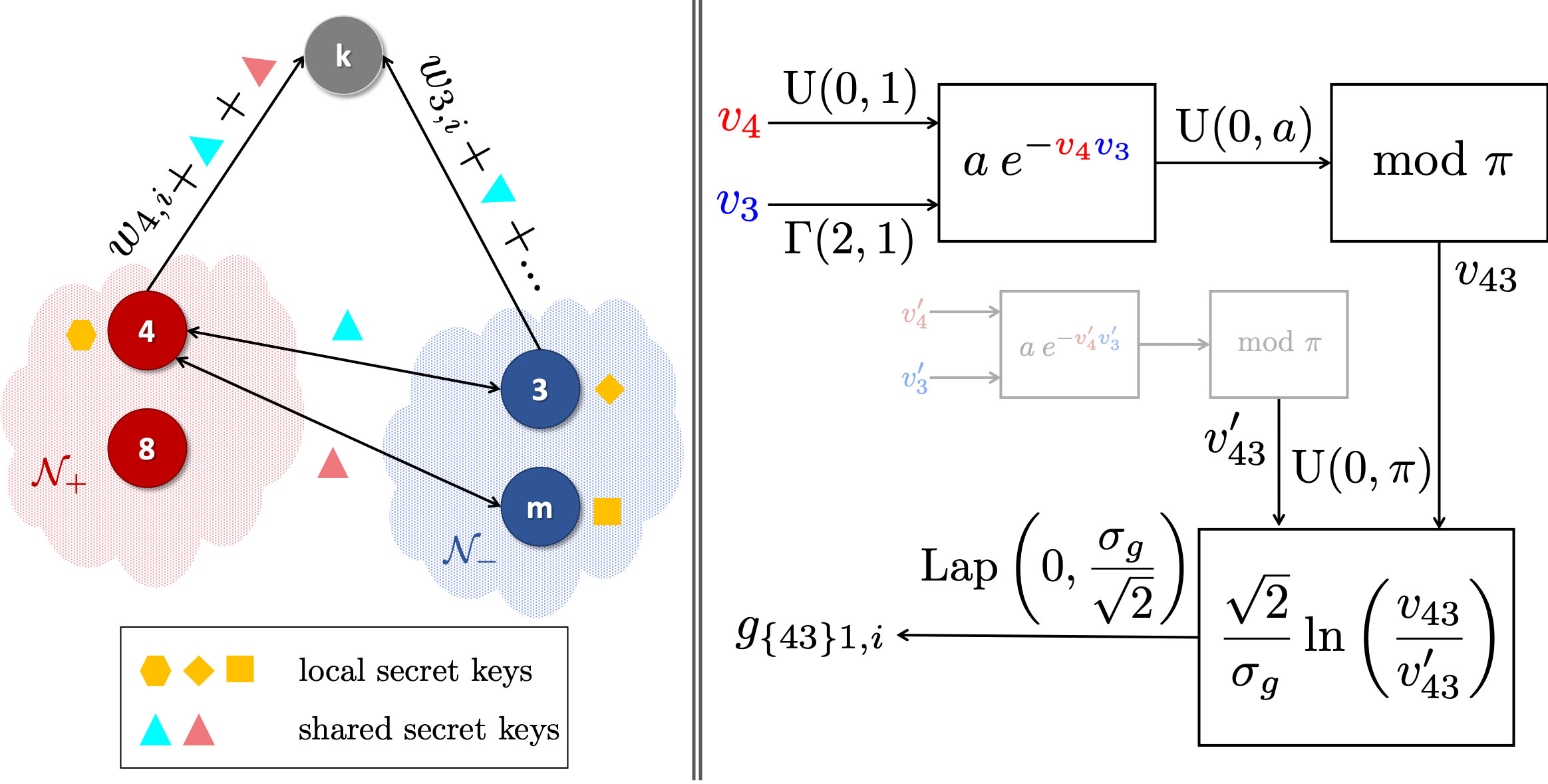}
	\caption{Illustration of the local graph-homomorphic process. The figure on the left describes the Diffie-Helman key exchange procedure. The figure on the right shows the transformation the random variable goes through.}\label{fig:locGHP}
\end{figure}

\begin{definition}[\textbf{Local graph-homomorphic process}]\label{def:locGHP}
	We are given a subnetwork of agent $k$, and neighbours $\ell \in \mathcal{N}_{+}$ and $m \in \mathcal{N}_{-}$. Let agent $\ell$ sample two secrect keys $\bm{v}_{\ell}$ and $\bm{v}'_{\ell}$ from a uniform distrubution on $[0,1]$, and let agent $m$ sample its keys $\bm{v}_{m}$ and $\bm{v}'_{m}$ from a gamma distribution $\Gamma(2,1)$. Let $\pi$ be some large prime number and let $a$ be a multiple of $\pi$. Then, for: \vspace{-0.2cm}
	\begin{align}
		\bm{v}_{\ell m} &= a\: e^{-\bm{v}_{\ell} \bm{v}_{m} } \mod \pi, \\
		\bm{v}'_{\ell m} &= a\: e^{-\bm{v}'_{\ell} \bm{v}'_{m} } \mod \pi,
	\end{align} \vspace{-0.2cm}
	the desired Laplacian noise can be constructed as:
	\begin{equation}\label{eq:locGHPnoise}
		\bm{g}_{\{\ell m\}k,i} = \frac{\sqrt{2}}{\sigma_g} \ln \left( \frac{\bm{v}_{\ell m}}{\bm{v}'_{\ell m}}\right).
	\end{equation}
\qed
\end{definition}

The local graph-homomorphic process proposed here is related to methods that fall under secure aggregation (see  \cite{bonawitz2016practical}). However, the main difference between our method and earlier investigations is that we devise a scheme for the more general \textit{distributed} setting, while other works focus largely on the particular case of federated learning with its specialized structure with a central processor. Furthermore, while we generate random numbers making our scheme more secure, the work \cite{bonawitz2016practical} adds pseudo-random numbers to the shared messages. Since pseudo-random numbers are generated by deterministic algorithms, it makes the noise predictable and scucebtible to attacks, contrary to random numbers. Furthermore, we quantify the privacy of our scheme as opposed to \cite{bonawitz2016practical}. In the next theorem, we show that using construction \eqref{eq:locGHPnoise} results in a differentially private algorithm.

\begin{theorem}[\textbf{Privacy of distributed learning with local graph-homomorphic perturbations}]\label{thrm:priv}
	Under the local graph-homomorphic process defined by \eqref{eq:locGHPnoise} the resulting privatized algorithm is $\epsilon(i)-$differentially private with high probability and with $\epsilon(i) $ defined in \eqref{eq:eps}. 
\end{theorem}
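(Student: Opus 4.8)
The plan is to reduce the claim to the differential-privacy analysis already carried out for the additive random-perturbation scheme. That earlier derivation, culminating in \eqref{eq:eps}, used only two facts: (i) the sensitivity $\Delta(j)$ is bounded with high probability by $B+B'+\sqrt{P}\Vert w^o-w'^o\Vert$, as stated in \eqref{eq:probBd}; and (ii) each transmitted message is masked by a zero-mean Laplacian noise of scale $\sigma_g/\sqrt2$, whose density $\propto e^{-\sqrt2\Vert x\Vert/\sigma_g}$ makes the log-density $(\sqrt2/\sigma_g)$-Lipschitz and hence bounds the likelihood ratio by $\exp\!\big((\sqrt2/\sigma_g)\Vert\w_{p,j-1}-\w'_{p,j-1}\Vert\big)$. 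Fact (i) depends only on the trajectory and the noise variance, so it is untouched by the new construction. Thus the whole theorem rests on showing that the variable $\bm{g}_{\{\ell m\}k,i}$ produced by \eqref{eq:locGHPnoise} is exactly $\mathrm{Lap}(0,\sigma_g/\sqrt2)$, identical in law to the noise used before.

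The core work is therefore a distributional computation, which I would carry out in four transformations. First I would show that the product $\bm{v}_\ell\bm{v}_m$ of a $\mathrm{Uniform}[0,1]$ variable and an independent $\Gamma(2,1)$ variable is $\mathrm{Exp}(1)$: conditioning on $\bm{v}_m=g$, the product $\bm{v}_\ell g$ is $\mathrm{Uniform}[0,g]$ with density $1/g$ on $(0,g)$, and integrating against the $\Gamma(2,1)$ density $g e^{-g}$ gives $\int_y^\infty e^{-g}\,dg=e^{-y}$. Second, since $\bm{v}_\ell\bm{v}_m\sim\mathrm{Exp}(1)$, the variable $\bm{z}=e^{-\bm{v}_\ell\bm{v}_m}$ is $\mathrm{Uniform}[0,1]$. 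Third, because $a$ is a multiple of $\pi$, say $a=k\pi$, the quantity $a\bm{z}\sim\mathrm{Uniform}[0,k\pi]$ reduced modulo $\pi$ is $\mathrm{Uniform}[0,\pi]$ (partition $[0,k\pi)$ into the $k$ sub-intervals of length $\pi$; each carries mass $1/k$ and is conditionally uniform after reduction), so $\bm{v}_{\ell m},\bm{v}'_{\ell m}$ are independent and $\mathrm{Uniform}[0,\pi]$. Fourth, writing $\bm{v}_{\ell m}=\pi\bm{w}_1$ and $\bm{v}'_{\ell m}=\pi\bm{w}_2$ with $\bm{w}_1,\bm{w}_2$ i.i.d.\ $\mathrm{Uniform}[0,1]$, the $\pi$ cancels in the ratio and $\ln(\bm{v}_{\ell m}/\bm{v}'_{\ell m})=(-\ln\bm{w}_2)-(-\ln\bm{w}_1)$ is a difference of two i.i.d.\ $\mathrm{Exp}(1)$ variables, i.e.\ standard Laplace. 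Scaling by the constant in \eqref{eq:locGHPnoise}, calibrated so that the result is $\mathrm{Lap}(0,\sigma_g/\sqrt2)$, completes the identification.

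With the marginal law established, I would close the argument in two steps. The pairing into $\mathcal{N}_+$ and $\mathcal{N}_-$ with opposite signs, together with the fact that the two partners $\ell,m$ recover the identical value $\bm{g}_{\{\ell m\}k,i}$ from their shared secret and the public $(a,\pi)$, guarantees that \eqref{eq:locGHPcond} holds, so the masked messages aggregated at the orchestrator are well defined and the observer indeed sees each transmitted message perturbed by Laplacian noise of scale $\sigma_g/\sqrt2$. Feeding this law into the chain $(a)$--$(b)$ preceding \eqref{eq:eps} reproduces the likelihood-ratio bound $\exp\!\big((\sqrt2 P/\sigma_g)\sum_{j}\Vert\ws_{j-1}-\ws'_{j-1}\Vert\big)$ and hence the same $\epsilon(i)$; combined with the high-probability sensitivity bound \eqref{eq:probBd}, this yields $\epsilon(i)$-differential privacy with high probability.

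I expect the distributional computation---in particular the identity $\mathrm{Uniform}[0,1]\times\Gamma(2,1)=\mathrm{Exp}(1)$ and the verification that the modulo-$\pi$ reduction preserves exact uniformity---to be the main obstacle, since the rest is inherited from the earlier analysis. A secondary point deserving care is that an agent with several partners adds a \emph{sum} of such Laplacians to one outgoing message; here I would note that the log-density of a sum of independent $\mathrm{Lap}(0,b)$ variables remains $(1/b)$-Lipschitz (its score is a posterior average of the individual scores, each bounded in magnitude by $1/b$), so the per-message likelihood-ratio bound, and therefore the constant appearing in $\epsilon(i)$, is unchanged.
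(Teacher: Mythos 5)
Your proposal matches the paper's proof essentially step for step: both reduce the theorem to the earlier high-probability differential-privacy analysis (the sensitivity bound \eqref{eq:probBd} plus the Laplacian likelihood-ratio bound yielding \eqref{eq:eps}) and then verify that \eqref{eq:locGHPnoise} produces exactly Laplacian noise through the same chain of transformations --- $\mathrm{Uniform}[0,1]\times\Gamma(2,1)=\mathrm{Exp}(1)$, exponentiation to $\mathrm{Uniform}[0,1]$, modulo-$\pi$ reduction preserving uniformity when $a$ is a multiple of $\pi$, and a difference of two exponentials giving a Laplacian. Your write-up is in fact slightly more complete than the paper's appendix: you prove the product identity that the paper only cites from the literature, and your Lipschitz-score argument for a sum of independent Laplacians addresses the fact that an agent with several partners adds multiple noise terms to one outgoing message, a point the paper's proof leaves implicit.
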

\begin{proof}
	See Appendix \ref{app:thrm-priv}. 
\end{proof} 
Note that since the noises cancel out locally during each iteration, the algorithm follows the same trajectory as the non-privatized algorithm. This implies that the MSD performance of the privatized distributed learning algorithm with the local graph-homomorphic perturbation \eqref{eq:locGHPnoise} is the same as the non-privatized distributed learning algorithm. In particular, the results on the convergence of the non-privatized algorithm from Theorem 9.1 in \cite{sayed2014adaptation} will continue to hold. This implies that the MSE will now be on the order of $O(\mu)\sigma_g^2$. 

	The above result highlights one difficulty with differential privacy. Note that, in principle, as the variance $\sigma_g^2$ of the added noise is increased, the level of privacy is also increased.  However, this process introduces an additional communication cost. For example, agent 1 needs to communicate to its neighbourhood the splitting into the positive agents and negative agents. It will also need to communicate with almost half the neighbourhood of its neighbours to agree on an added noise $\bm{g}_{\{1m\}k,i}$ for $k \in \mathcal{N}_1$ and $m \in \mathcal{N}_k$. Thus, in total, the communication cost for agent 1 will increase by at most $|\mathcal{N}_1| + \sum_{k=2}^6|\mathcal{N}_k|/2$. The additional communication cost is not captured by the privacy measure even though it clearly affects the level of privacy\footnote{If we were to decrease the number of times a random noise is generated by the local graph-homomorphic process and instead re-use the noise, then we would be decreasing the communication cost but increasing the chance of an attacker learning the noise and unmasking the messages.}. 
	This reduces its functionality and motivates the search for other privacy metrics, such as those based on information-theoretic measures [55]. For example, one metric could be the mutual information between the original message and the perturbed shared message \cite{Poor}. Thus, if we assume the individual messages $\{\w_{k,i}\}$ are Guassian random variables with variance $\sigma_{\w}^2$, and if we perturb them with a total Guassian noise $\bm{g}_{k,i}$ of variance $\sigma_g^2$, then the mutual information is given by:
	\begin{align}
		I(\w_{k,i}; \w_{k,i} + \bm{g}_i) &= H(\w_{k,i} + \bm{g}_{k,i}) -   H(\w_{k,i} + \bm{g}_{k,i}|\w_{k,i}) \notag \\
		&= H(\w_{k,i} + \bm{g}_{k,i}) - H(\bm{g}_{k,i}) \notag \\
		&= \frac{1}{2}\log \left( 1+  \frac{\sigma_{\w}^2 }{\sigma_g^2} \right).
	\end{align}
	Obeserve again that as we increase the noise variance $\sigma_g^2$, mutual information decreases while privacy increases.
	Mutual information again fails to capture the communication cost incurred by the process. It appears that no metric capturing the communication-privacy trade-off exists as of yet in the literature. This calls for the search of a more appropriate privacy metric for secure aggregation methods.

\section{Experimental Analysis}
We run two experiments. In the first experiment we focus on a linear regression problem with simulated data. We then study a classification problem on real data.

\subsection{Generalized Distributed Privacy Learning}
For each of consensus, CTA, and ATC diffusion, we compare four algorithms: the standard distributed algorithm, the privatized algorithm with random perturbations, the privatized algorithm with graph-homomorphic perturbations, and the privatized algorithm with local graph-homomorphic perturbations. We consider a network of 30 agents (Fig. \ref{fig:genNet}) and a regularized quadratic loss function:
\begin{equation}
	\min_{w\in \mathbb{R}^2} \frac{1}{30}\sum_{p=1}^{30} \frac{1}{100}\sum_{n=1}^{100} ( \bm{d}_p(n)-\bm{u}_{p,n}^\tran w )^2 + 0.01\Vert w\Vert^2.
\end{equation}
 We generate a random dataset $\{\bm{u}_{p,n}, \bm{d}_p(n)\}_{n=1}^{100}$ as follows: we let the two-dimensional feature vector $\bm{u}_{p,n} \sim \mathcal{N}(0;R_u)$, and add noise $\bm{o}_{p}(n) \sim \mathcal{N}(0;\sigma^2_{o,p})$ such that $\bm{d}_p(n) = \bm{u}_{p,n}^\tran w^{\star} + \bm{o}_{p}(n)$, for some generative model $w^{\star} \in \mathbb{R}^2$ and randomly set variance $R_u$ and added noise variance $\sigma_{o,p}^2$. To make the data distributions non-iid, we use different noise variances $\sigma_{o,p}^2$ across the agents. The optimal global model has a closed form solution with $\widehat{R}_u$ and $\widehat{r}_{uo}$ as defined previously:
 \begin{equation}
 	w^o = (\widehat{R}_u + 0.01I)^{-1} (\widehat{R}_u  w^{\star} + \widehat{r}_{uo}).
 \end{equation}
	
We set the step-size $\mu = 0.4$, the noise variance $\sigma_g^2 = 0.01$, and run the algorithms for 1000 iterations. We repeat the experiment 20 times and plot the MSD in the log domain of the centroid model and the average of the individual MSDs:
\begin{align}
	\text{MSD}_i &\eqdef \Vert \w_{c,i} - w^o\Vert^2, \\
	\text{MSD}_{\text{avg},i} &\eqdef \frac{1}{P}\sum_{p=1}^P \Vert \w_{p,i} - w^o\Vert^2.
\end{align}
\vspace{-0.5cm}
\begin{figure}[h!]
	\centering
	\includegraphics[width=0.3\textwidth]{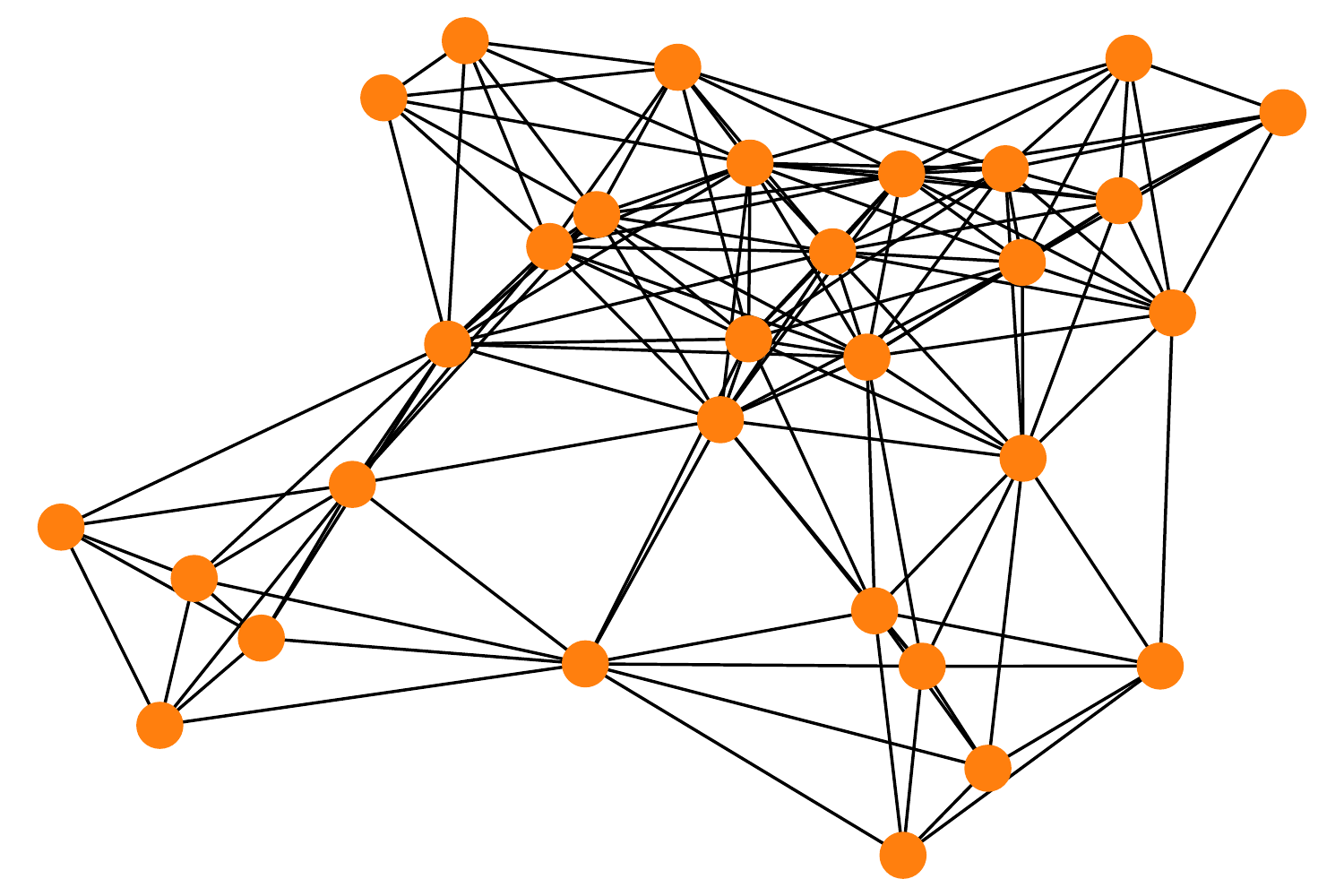}
	\caption{The generated network of agents. }\label{fig:genNet}
\end{figure}

\begin{figure*}[h!]
	\centering
	\begin{subfigure}[b]{0.32\textwidth}
		\centering
		\includegraphics[width=\textwidth]{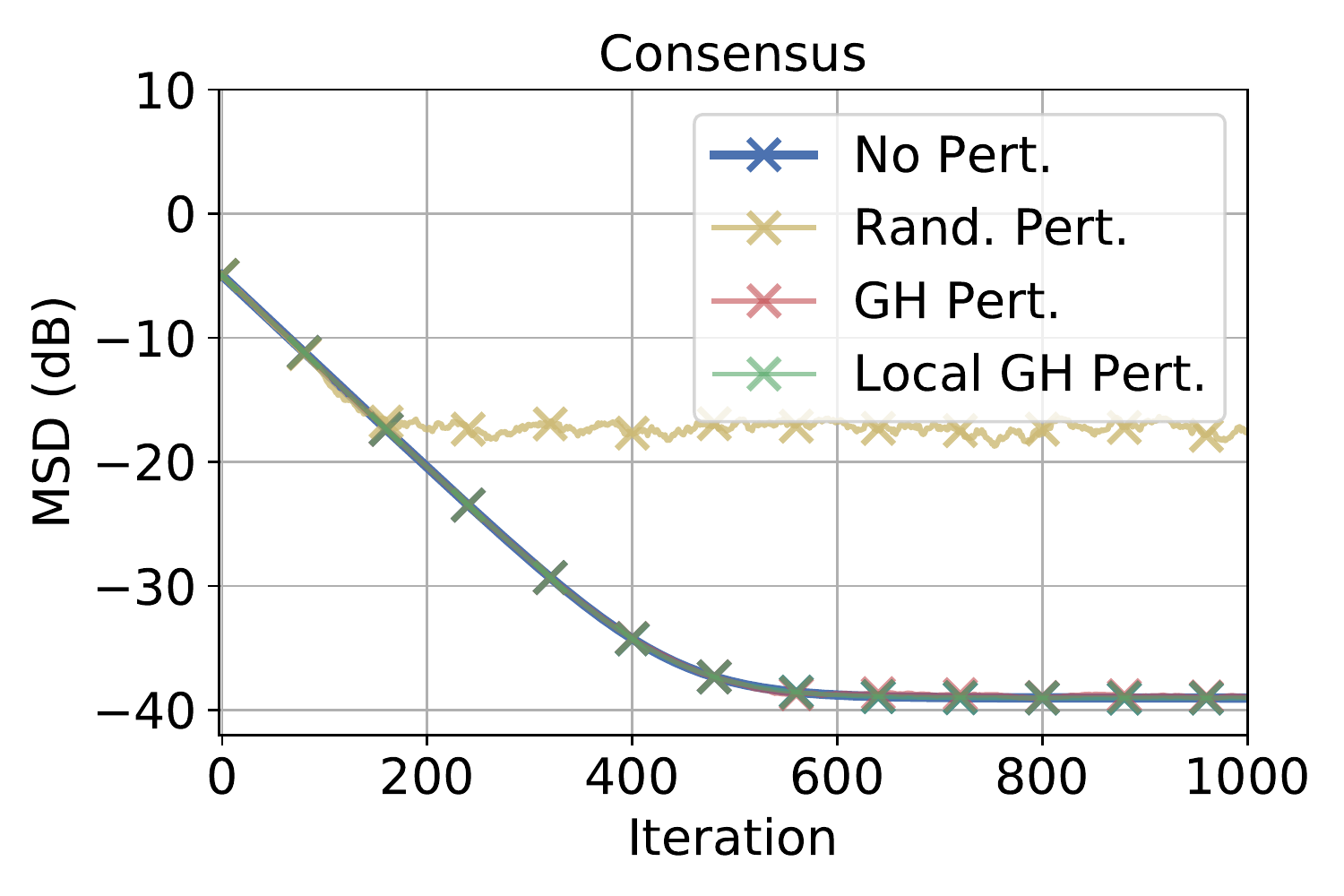}
		\caption{Consensus: centroid MSD}
	\end{subfigure}
	\begin{subfigure}[b]{0.32\textwidth}
		\centering
		\includegraphics[width=\textwidth]{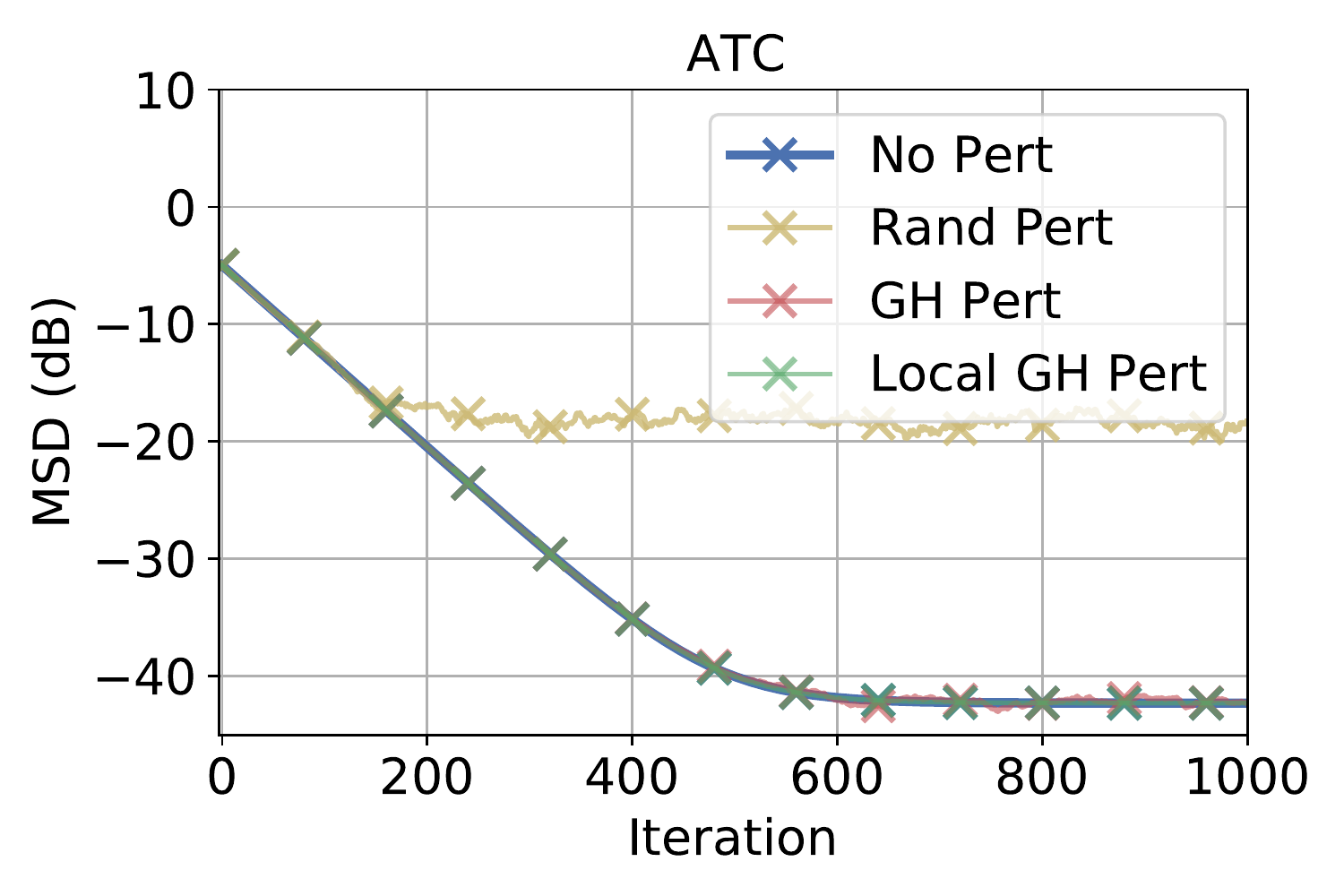}
		\caption{ATC: centroid MSD}
	\end{subfigure}
\begin{subfigure}[b]{0.32\textwidth}
	\centering
	\includegraphics[width=\textwidth]{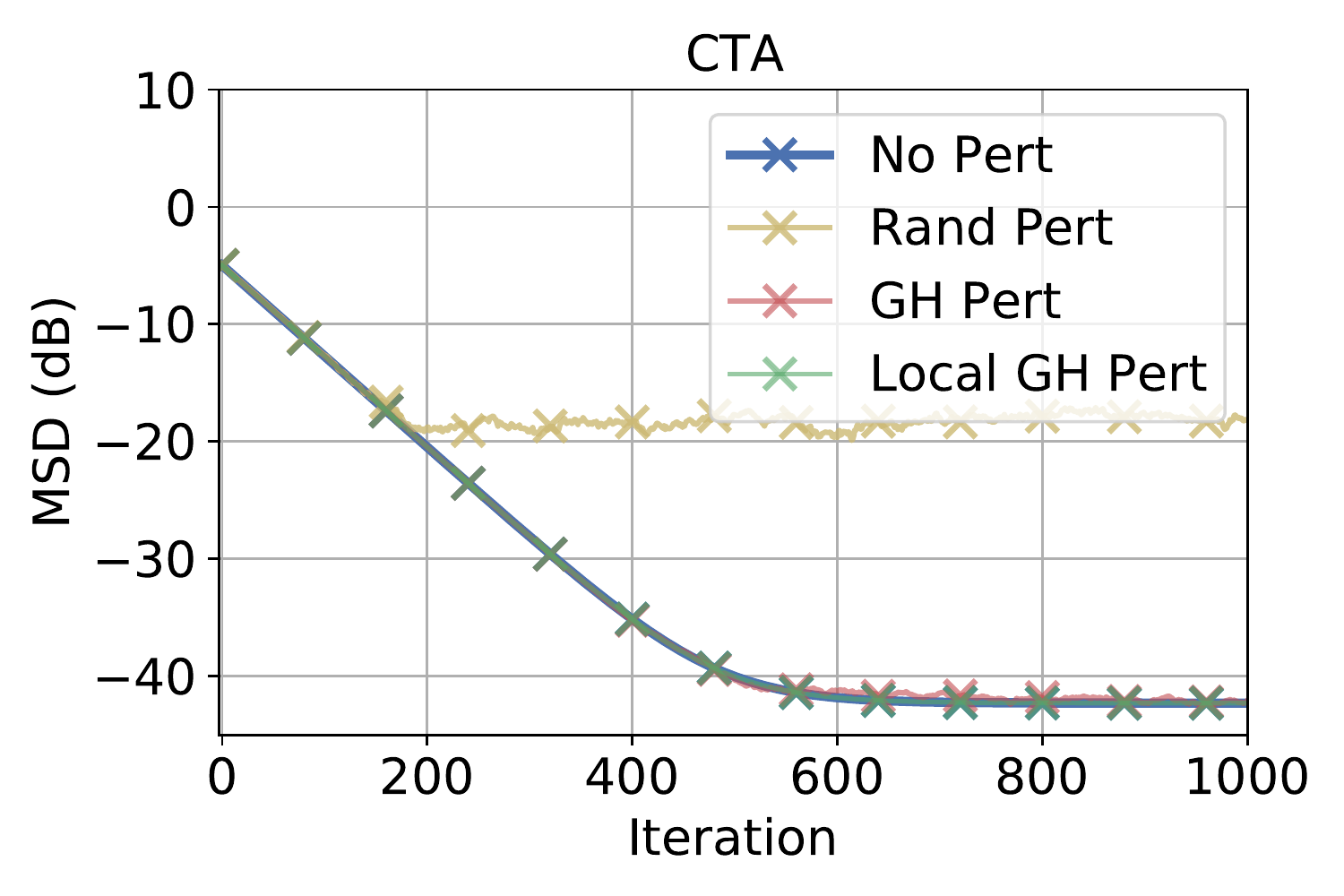}
	\caption{CTA: centroid MSD}
\end{subfigure}

	\begin{subfigure}[b]{0.32\textwidth}
		\centering
		\includegraphics[width=\textwidth]{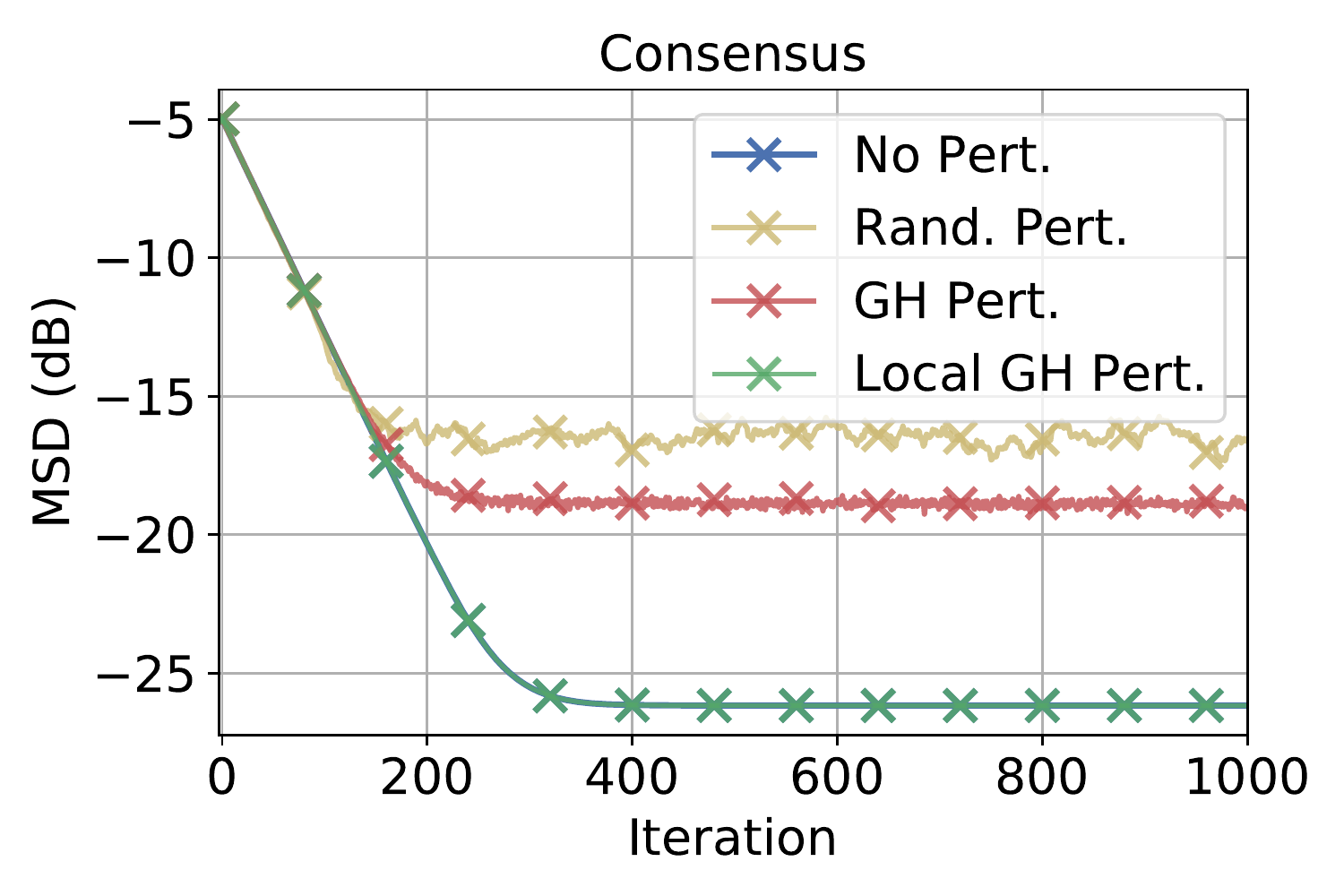}
		\caption{Consensus: avg. ind. MSD}
	\end{subfigure}	
\begin{subfigure}[b]{0.32\textwidth}
	\centering
	\includegraphics[width=\textwidth]{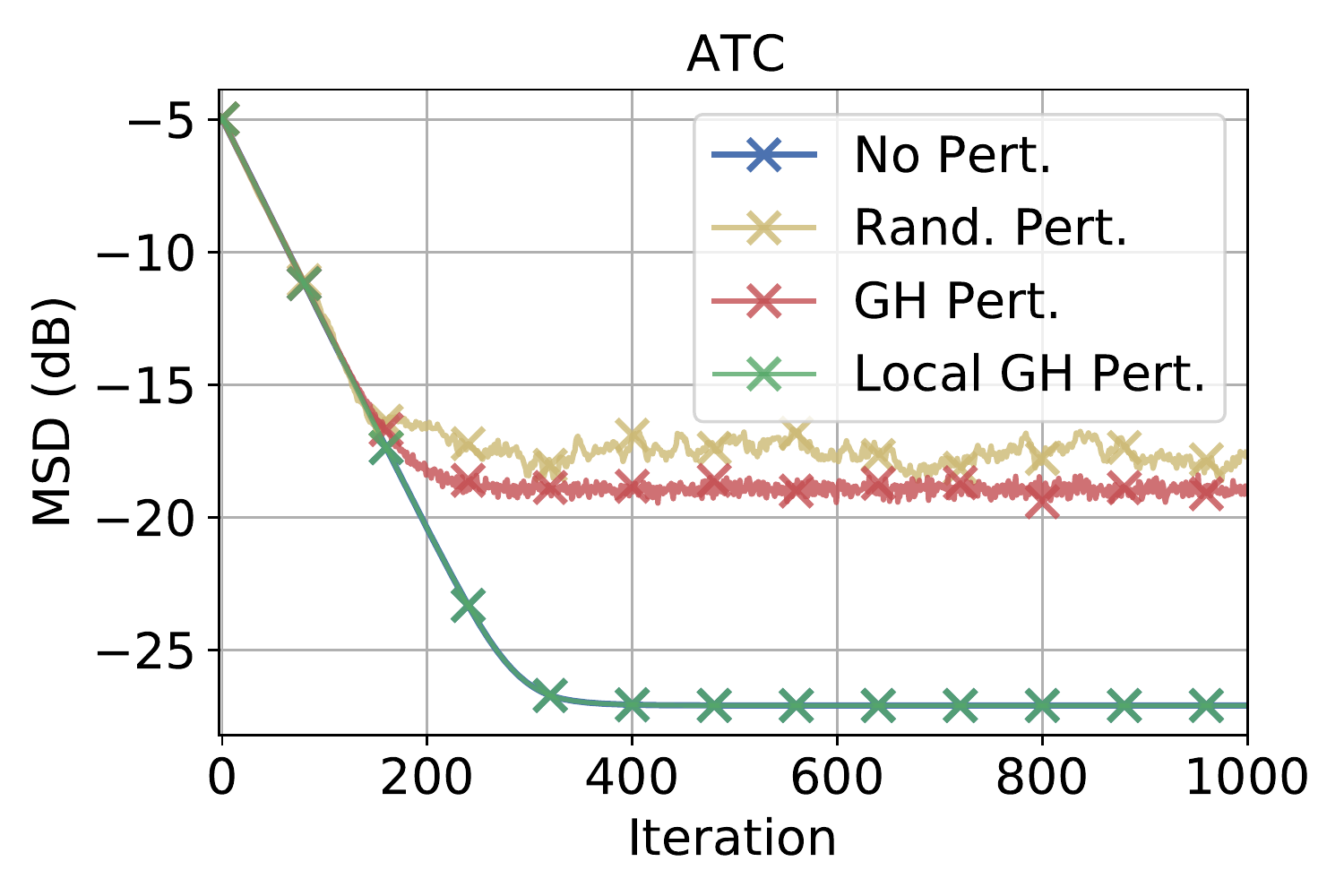}
	\caption{ATC: avg. ind. MSD}
\end{subfigure}
\begin{subfigure}[b]{0.32\textwidth}
	\centering
	\includegraphics[width=\textwidth]{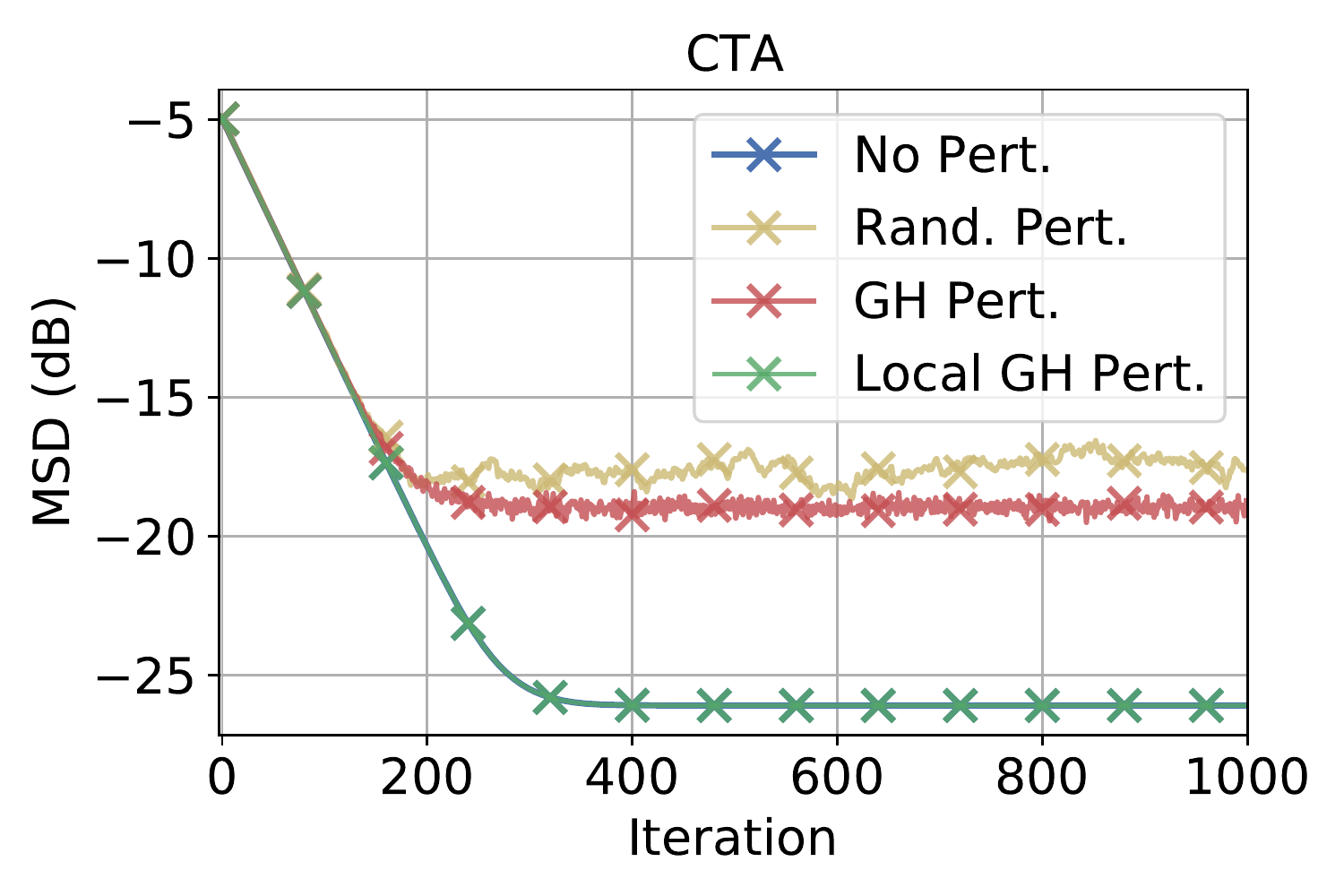}
	\caption{CTA: avg. ind. MSD}
\end{subfigure}

\caption{MSD plots for the three distributed learning algorithms.}\label{fig:MSD-dist}
\end{figure*}

As we observe in Fig. \ref{fig:MSD-dist}, graph-homomorphic perturbations do not hinder the performance of the algorithm in approximating the true model as do random perturbations. The random perturbations MSD curve (yellow) is significantly higher than the graph homomorphic perturbations MSD curve (red) which is close to the non-perturbed MSD curve (blue). If we examine the average MSD of the individual models, we observe that the decay in performance is not as much as that for random perturbations. Moreover, since local graph-homomorphic perturbations do not affect the performance of the algorithms, we observe that the MSD curve follows that of the non-privatized algorithm.



\subsection{Classification in Distributed Learning}
We next run an experiment on a classification dataset. We use the Avazu click through dataset \cite{avazu}, which contains a set of online add clicks. We distribute the data among $P=50$ agents. To get non-iid data, we add non-iid Gaussian noise to each agent's dataset. We let $\mu = 0.5$, $\rho=0.001$, and $\sigma_g^2 = 0.8$. We plot the testing error in Fig. \ref{fig:TE} of the standard algorithm, the privatized algorithm with random perturbations, the privatized algorithm with graph-homomorphic perturbations, and the privatized algorithm with local graph-homomorphic perturbations. It comes as no surprise that using graph-homomorphic perturbations does not hinder the testing error as random perturbations do, and local graph-homomorphic perturbations do not change the testing error.  
\begin{figure}[h!]
	\centering
\begin{subfigure}[b]{0.42\textwidth}
	\centering
	\includegraphics[width=\textwidth]{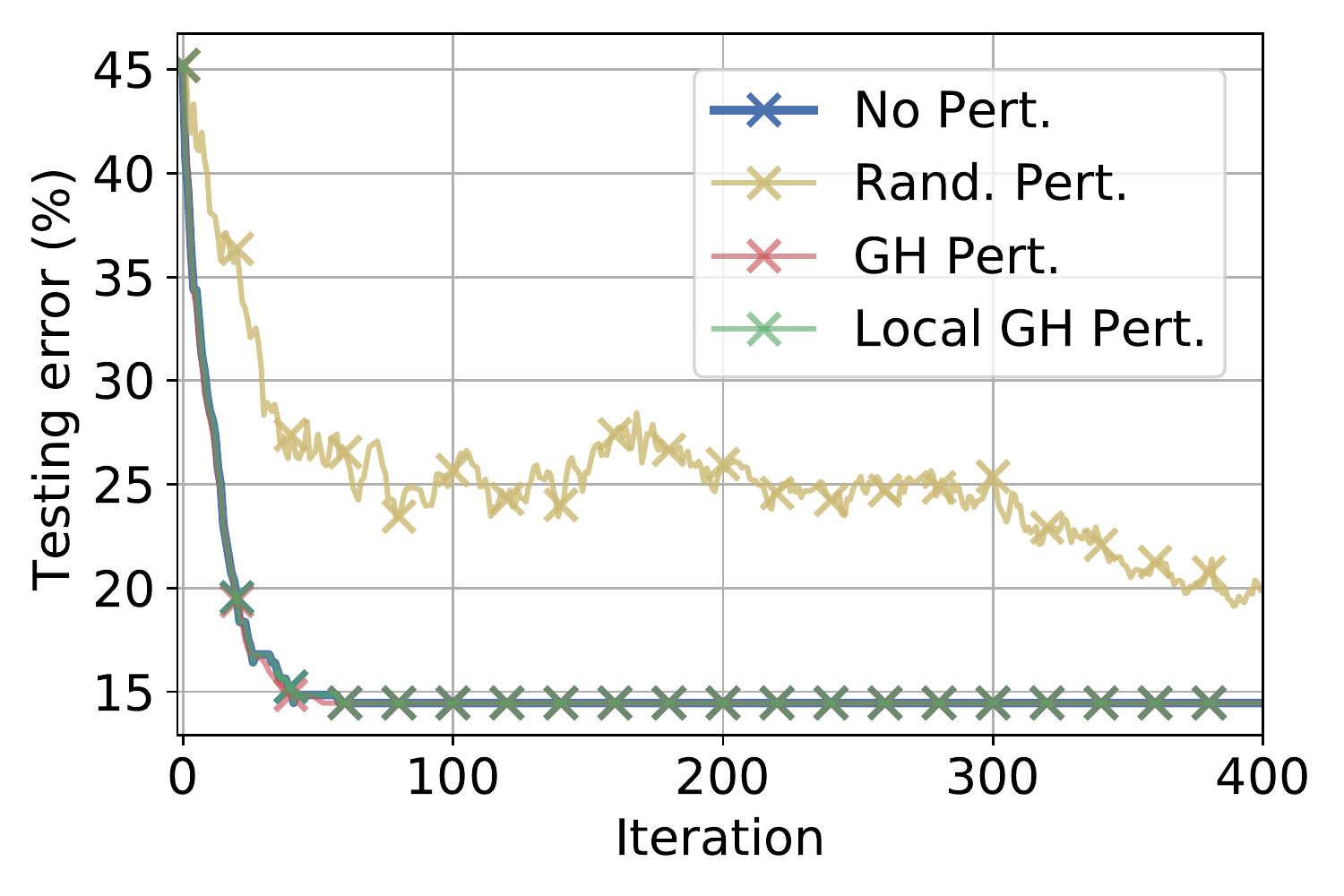}
	\caption{Centroid testing error}
\end{subfigure}
\begin{subfigure}[b]{0.42\textwidth}
	\centering
	\includegraphics[width=\textwidth]{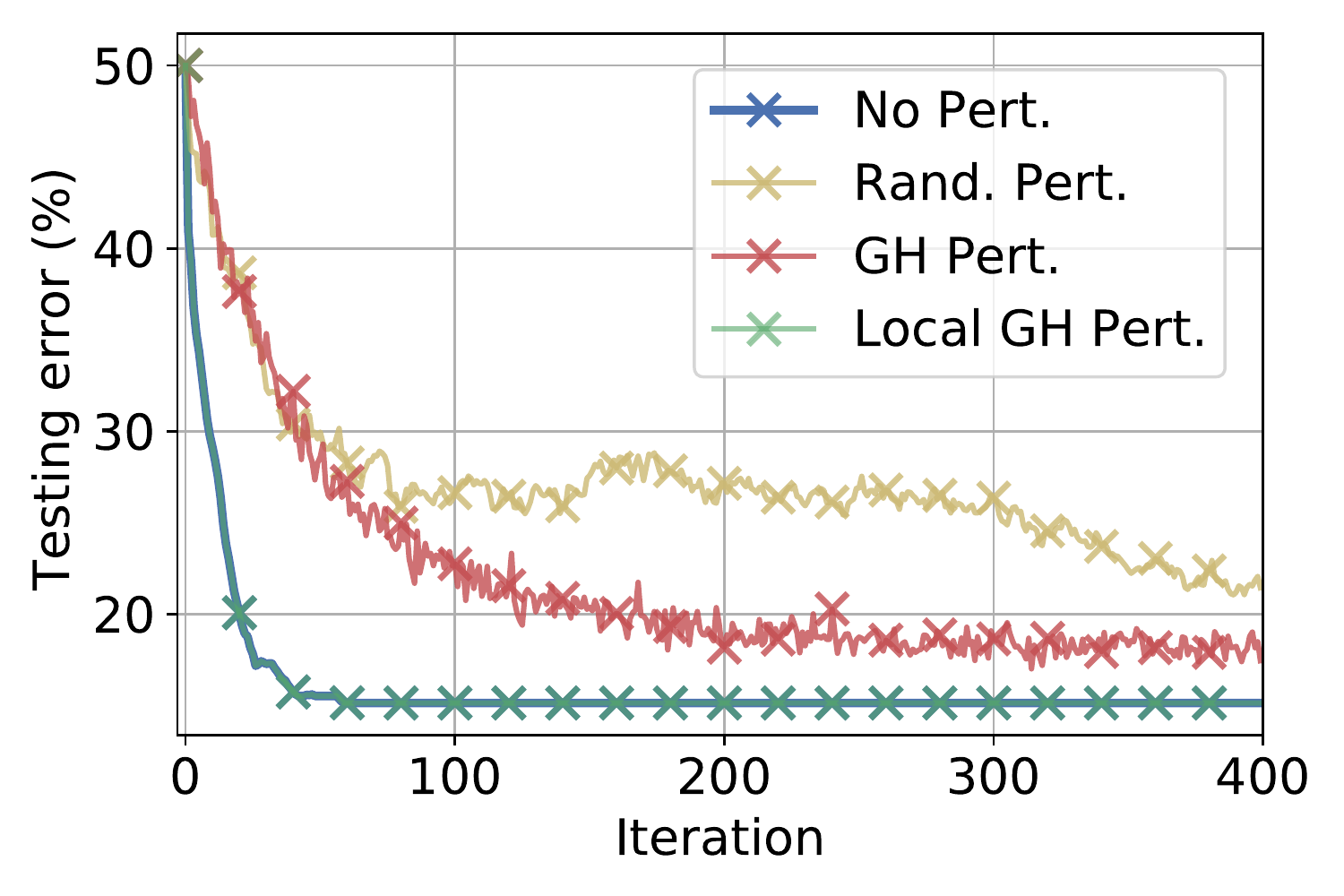}
	\caption{Average individual testing error}
\end{subfigure}
\caption{Testing error of standard ATC, privatized ATC with random perturbations, and privatized ATC with graph-homomorphic perturbations.}\label{fig:TE}
\end{figure}

\section{Conclusion}
The goal of this work has been to study the effect of privacy in distributed learning. We established the superiority of graph-homomorphic perturbations in the model performance, as opposed to random perturbations. We then designed local graph-homomorphic perturbations that ensure the added noise does not affect the model performance. Thus, the main takeaway from this work is that graph-homomorphic perturbations are better than random perturbations in distributed learning.

\appendices
\section{Sensitivity of the Distributed Algorithm}\label{app:sensCalc}
We study the sensitivity of the distributed learning algorithm \eqref{eq:preComb}--\eqref{eq:postComb}, which is defined at each time instant by the expression:
\begin{equation}
	\Delta(i) =  \Vert \ws_{i}- \ws'_{i}\Vert.
\end{equation}
This definition captures the change when the data samples of a single agent are changed. The prime symbol represents the new trajectory. We can bound the sensitivity using the triangle inequality by the individual errors and the difference in the optimal models:
\begin{equation}
	\Delta(i) \leq  \Vert \wse_{i}'\Vert + \Vert \wse_{i}\Vert + \sqrt{P}\Vert w^o - w'^o\Vert.
\end{equation}
Then, for any constants $B$ and $B'$ chosen by the desiger, we can use Markov's inequality to get the bounds:
\begin{align}
	\mathbb{P}(\Vert \wse_{i}\Vert \geq B ) &\leq \frac{\mathbb{E}\Vert \wse_i\Vert^2}{B^2}, \\
	\mathbb{P}(\Vert \wse'_{i}\Vert \geq B' ) &\leq \frac{\mathbb{E}\Vert \wse'_i\Vert^2}{B'^2}.	
\end{align}
Now we recal from Theorem \ref{thrm:MSEpriv} that:
\begin{align}
	\mathbb{E}\Vert \wse_i\Vert^2 &\leq \kappa_2^2 \mathds{1}^\tran \Gamma^i \begin{bmatrix}
			\mathbb{E}\Vert \bar{\ws}_0\Vert^2  \\ \Vert \mathbb{E}\check{\ws}_0\Vert^2 
		\end{bmatrix} + O(\mu) + O(\mu^{-1}), 
	\\
	\mathbb{E}\Vert \wse'_i\Vert^2 &\leq \kappa'^2_2 \mathds{1}^\tran \Gamma'^i \begin{bmatrix}
		\mathbb{E}\Vert \bar{\ws}'_0\Vert^2  \\ \mathbb{E}\Vert \check{\ws}'_0\Vert^2 
	\end{bmatrix} + O(\mu) + O(\mu^{-1}).
\end{align} 
It follows that the sensitivity is bounded by:
\begin{align}
	\Delta(i) \leq B + B' +\sqrt{ P} \Vert w^o - w'^o\Vert 
\end{align}
with high probability given by:
\begin{align}
		&\mathbb{P}(	\Delta(i) \leq B + B' + \sqrt{P} \Vert w^o - w'^o\Vert ) 
		\notag \\
		&\geq \left ( 1- \frac{\kappa_2^2 \mathds{1}^\tran \Gamma^i \begin{bmatrix}
			\mathbb{E}	\Vert \bar{\ws}_0\Vert^2  \\ \mathbb{E}\Vert \check{\ws}_0\Vert^2 
			\end{bmatrix} + O(\mu) + O(\mu^{-1})}{B^2} \right)
	\notag 	\\
	&\quad \times \left(1-  \frac{\kappa'^2_2 \mathds{1}^\tran \Gamma'^i \begin{bmatrix}
			\mathbb{E}	\Vert \bar{\ws}'_0\Vert^2  \\ \mathbb{E}\Vert \check{\ws}'_0\Vert^2 
		\end{bmatrix} + O(\mu) + O(\mu^{-1})}{B'^2} \right).
\end{align}

\section{Proof of Theorem \ref{thrm:MSEpriv}}\label{app:MSEpriv}
The following proof follows similar steps to those used in \cite{sayed2014adaptation} for non-private algorithms. Using the Jordan decomposition of $A_2^\tran A_0^\tran A_1^\tran$:
\begin{align}
	A_2^\tran A_0^\tran A_1^\tran &= V_{\theta} J V_{\theta}^{-1}, \\
	V_{\theta} &\eqdef \begin{bmatrix}
		q & V_R
	\end{bmatrix}, \\
	V_{\theta}^{-1}&\eqdef \begin{bmatrix}
		\mathds{1}^\tran \\ V_L^\tran
	\end{bmatrix}, \\
	J &  \eqdef  \begin{bmatrix}
		1 & 0 \\ 0 & J_{\theta}
	\end{bmatrix}, 
\end{align}
where $q$ is the Perron eigenvector of $A_2^\tran A_0^\tran A_1^\tran$ and $J_{\theta}$ contains Jordan blocks of the corresponding eigenvalues $\lambda$ of the form (example of a $3\times 3$ matrix):
\begin{equation}
	 \begin{bmatrix}
		\lambda & 0 & 0 \\
		\theta & \lambda & 0 \\
		0 & \theta & \lambda
	\end{bmatrix},
\end{equation}
with a constant $\theta$ in the subdiagonal. We first write:
\begin{align}
	\bm{\mathcal{B}}_{i-1} = (\mathcal{V}_{\theta}^{-1})^\tran (\mathcal{J} - \bm{\mathcal{D}}_{i-1}^\tran )\mathcal{V}_{\theta}^\tran,
\end{align}
where:
\begin{align}
	\mathcal{V}_{\theta}^{-1} &\eqdef  {V}_{\theta}^{-1} \otimes I_M, \\
	\mathcal{V}_{\theta} &\eqdef  {V}_{\theta} \otimes I_M, \\
	\mathcal{J} &\eqdef J \otimes I_M =  \begin{bmatrix}
		I_M & 0 \\ 0 & \mathcal{J}_{\theta}
		\end{bmatrix}, \\
	\bm{\mathcal{D}}_{i-1}^\tran &\eqdef \mu	\mathcal{V}_{\theta}^\tran \mathcal{A}_2^\tran \bm{\mathcal{H}}_{i-1} \mathcal{A}_1^\tran (\mathcal{V}_{\theta}^{-1})^\tran = \begin{bmatrix}
		\bm{D}_{11,i-1}^\tran & \bm{D}_{21,i-1}^\tran \\
		\bm{D}_{12,i-1}^\tran & \bm{D}_{22,i-1}^\tran
	\end{bmatrix}.
\end{align}
It is shown in the proof of Theorem 9.1 in \cite{sayed2014adaptation} that:
\begin{align}
	\Vert I_M - \bm{D}_{11,i-1}\Vert  &\leq 1-\sigma_{11}\mu, \\
	\Vert \bm{D}_{ij}\Vert \leq \sigma_{ij}\mu,
\end{align}
for some positive constants $\sigma_{ij}$ for $i,j=1,2$.
Multiplying both sides of the error recursion \eqref{eq:recPrivDist} from the left by $\mathcal{V}_{\theta}^\tran$:
\begin{align}
	\mathcal{V}_{\theta}^\tran \wse_i = &\: \mathcal{V}_{\theta}^\tran \bm{\mathcal{B}}_{i-1} (\mathcal{V}_{\theta}^{-1})^\tran \mathcal{V}_{\theta}^\tran \wse_{i-1} +\mu \mathcal{V}_{\theta}^\tran \mathcal{A}_2^\tran \bm{s}_i - \mu \mathcal{V}_{\theta}^\tran \mathcal{A}_2^\tran b \notag \\
	&+ \mu \mathcal{V}_{\theta}^\tran \mathcal{A}_2^\tran \bm{\mathcal{H}}_{i-1} \text{diag}( \mathcal{A}_1^\tran \bm{\mathcal{G}}_{1,i}) - \mathcal{V}_{\theta}^\tran \text{diag}(\mathcal{A}_2^\tran \bm{\mathcal{G}}_{2,i}) \notag \\
	& - \mathcal{V}_{\theta}^\tran \mathcal{A}_2^\tran \text{diag} (\mathcal{A}_0^\tran \bm{\mathcal{G}}_{0,i} )- \mathcal{V}_{\theta}^\tran \mathcal{A}_2^\tran \mathcal{A}_0^\tran \text{diag} (\mathcal{A}_1^\tran  \bm{\mathcal{G}}_{1,i}),
\end{align} 
and introducing the new notation:
\begin{align}
	\mathcal{V}_{\theta}^\tran \wse_i = \begin{bmatrix}
		(q^\tran \otimes I_M) \wse_i \\ (V_R^\tran \otimes I) \wse_i
	\end{bmatrix} &\eqdef \begin{bmatrix}
	\bar{\ws}_i \\ \check{\ws}_i 
	\end{bmatrix}, \\
	\mu\mathcal{V}_{\theta}^\tran\mathcal{A}_2^\tran \bm{s}_i = \mu\begin{bmatrix}
		(q^\tran \otimes I_M)\mathcal{A}_2^\tran \bm{s}_i \\ (V_R^\tran \otimes I) \mathcal{A}_2^\tran \bm{s}_i
	\end{bmatrix} &\eqdef \begin{bmatrix}
	\bm{\bar{s}}_i \\ \bm{\check{s}}_i
\end{bmatrix}, \\
	\mu \mathcal{V}_{\theta}^\tran\mathcal{A}_2^\tran b = \mu\begin{bmatrix}
		(q^\tran \otimes I_M)\mathcal{A}_2^\tran b\\ (V_R^\tran \otimes I) \mathcal{A}_2^\tran b
	\end{bmatrix} &\eqdef \begin{bmatrix}
		0 \\ \check{b}
	\end{bmatrix},
\end{align}
we get:
\begin{align}
	\begin{bmatrix}
		\bar{\ws}_i \\\check{\ws}_i 
	\end{bmatrix}
	= & \begin{bmatrix}
		I_M - \bm{D}_{11,i-1}^\tran & -\bm{D}_{21,i-1}^\tran \\
		- \bm{D}_{12,i-1}^\tran & \mathcal{J}_{\epsilon}
	\end{bmatrix} 
	\begin{bmatrix}
		\bar{\ws}_{i-1} \\ \check{\ws}_{i-1}
	\end{bmatrix} 
	+ \begin{bmatrix}
		\bm{\bar{s}}_{i} \\ \bm{\check{s}}_{i}
	\end{bmatrix} 
	+ \begin{bmatrix}
		0 \\ \check{b}
	\end{bmatrix} \notag \\
& + \mu \mathcal{V}_{\theta}^\tran \mathcal{A}_2^\tran \bm{\mathcal{H}}_{i-1} \text{diag} ( \mathcal{A}_1^\tran \bm{\mathcal{G}}_{1,i}) - \mathcal{V}_{\theta}^\tran  \text{diag} (\mathcal{A}_2^\tran \bm{\mathcal{G}}_{2,i}) \notag \\
& - \mathcal{V}_{\theta}^\tran \mathcal{A}_2^\tran \text{diag} (\mathcal{A}_0^\tran \bm{\mathcal{G}}_{0,i}) - \mathcal{V}_{\theta}^\tran \mathcal{A}_2^\tran \mathcal{A}_0^\tran  \text{diag} (\mathcal{A}_1^\tran  \bm{\mathcal{G}}_{1,i}).
\end{align}
Then, taking the expectation of the $\ell_2-$norm, and using Jensen's inequality, we have:
\begin{align}
	\mathbb{E}\Vert \bar{\ws}_i\Vert^2 &\leq (1-\sigma_{11} \mu) \mathbb{E}\Vert \bar{\ws}_{i-1}\Vert^2 + \frac{\sigma_{21}^2\mu}{\sigma_{11}} \mathbb{E}\Vert \check{\ws}_{i-1}\Vert^2 + \mathbb{E}\Vert \bm{\bar{s}}_i \Vert^2 
	 \notag \\
	&\quad + 2\mu^2 \mathbb{E} \Vert (q^\tran\otimes I_M) \mathcal{A}_2^\tran \bm{\mathcal{H}}_{i-1} \text{diag} ( \mathcal{A}_1^\tran \bm{\mathcal{G}}_{1,i})\Vert^2 +  
	\notag \\
	&\quad + 2\mathbb{E}\Vert  (q^\tran\otimes I_M)\mathcal{A}_2^\tran \mathcal{A}_0^\tran \text{diag}(\mathcal{A}_1^\tran \bm{\mathcal{G}}_{1,i})\Vert^2 
	\notag \\
	&\quad + \mathbb{E} \Vert (q^\tran\otimes I_M) \text{diag}( \mathcal{A}_2^\tran \bm{\mathcal{G}}_{2,i}) \Vert^2  \notag \\
	&\quad + \mathbb{E}\Vert  (q^\tran\otimes I_M) \mathcal{A}_2^\tran \text{diag} (\mathcal{A}_0^\tran \bm{\mathcal{G}}_{0,i})\Vert^2   , 
\end{align}
and:
\begin{align}
	\mathbb{E}\Vert \check{\ws}_i\Vert^2 &\leq \left( \rho(J_{\theta}) + \theta + \frac{2\sigma_{22}^2 \mu^2}{1-\rho(J_{\theta}) - \theta} \right) \mathbb{E} \Vert \check{\ws}_{i-1}\Vert^2 
	 \notag \\
	&\quad + \frac{3\sigma_{21}^2 \mu^2}{1-\rho(J_{\theta}) - \theta} \mathbb{E}\Vert \bar{\ws}_{i-1}\Vert^2  + \frac{3\Vert \check{b}\Vert^2}{1-\rho(J_{\theta}) - \theta} 
	\notag \\
	&\quad + \mathbb{E}\Vert \bm{\check{s}}_i\Vert^2 
	\notag \\
	&\quad + 2 \mu^2 \mathbb{E} \Vert (V_R^\tran \otimes I_M) \mathcal{A}_2^\tran \bm{\mathcal{H}}_{i-1} \text{diag}(\mathcal{A}_1^\tran \bm{\mathcal{G}}_{1,i})\Vert ^2 
	\notag \\
	&\quad + \mathbb{E}\Vert  J_{\epsilon}^\tran(V_R^\tran\otimes I_M) \mathcal{A}_2^\tran \mathcal{A}_0^\tran \text{diag} (\mathcal{A}_1^\tran \bm{\mathcal{G}}_{1,i})\Vert^2 
	\notag \\
	& \quad + \mathbb{E} \Vert (V_R^\tran\otimes I_M) \text{diag}(\mathcal{A}_2^\tran \bm{\mathcal{G}}_{2,i}) \Vert^2 \notag \\
	&\quad + \mathbb{E}\Vert  (V_R^\tran\otimes I_M) \mathcal{A}_2^\tran \text{diag} (\mathcal{A}_0^\tran \bm{\mathcal{G}}_{0,i})\Vert^2  ,
\end{align}
with the cross terms equal to zero due to the independence of the zero-mean random variables. Then, we bound the sum of the gradient noise:
\begin{align}
	&\mathbb{E}\Vert \bm{\bar{s}}_i\Vert^2 + \mathbb{E}\Vert \bm{\check{s}}_i \Vert^2 \notag \\
	&\leq  \Vert \mathcal{V}_{\theta}\Vert^2 \mu^2 \sum_{p=1}^P \mathbb{E}\Vert \bm{s}_{p,i}\Vert^2 \notag \\
	&\leq \kappa_1^2 \mu^2 \sum_{p=1}^P \beta_{s,p}^2 \mathbb{E}\Vert \widetilde{\bm{\phi}}_{p,i-1}\Vert^2  + \sigma_{s,p}^2 \notag \\
	&\leq \kappa_1^2\mu^2 \sum_{p=1}^P \beta_{s,p}^2  \sum_{m=1}^P \left ( \mathbb{E}\Vert \we_{m,i-1}\Vert^2 +  \mathbb{E}\Vert \bm{g}_{1,mp,i}\Vert^2 \right) + \sigma_{s,p}^2 \notag \\
	&\leq \kappa_1^2\mu^2  \beta_{s}^2\mathbb{E}\Vert \wse_{i-1}\Vert^2 + \kappa_1^2\mu^2 \sigma_s^2 + \kappa_1^2\mu^2 \sum_{p,m=1}^P\beta_{s,p}^2 \mathbb{E}\Vert \bm{g}_{1,mp,i}\Vert^2 \notag \\
	&\leq \kappa_1^2 \kappa_2^2 \mu^2 \beta_s^2 \left( \mathbb{E}\Vert \bm{\bar{s}}_i\Vert^2 + \mathbb{E}\Vert \bm{\check{s}}_i\Vert^2 \right) + \kappa_1^2 \mu^2 \sigma_s^2 
	\notag \\
	&\quad+ \kappa_1^2\mu^2 \sum_{p,m=1}^P\beta_{s,p}^2 \mathbb{E}\Vert \bm{g}_{1,mp,i}\Vert^2,
\end{align}
where we introduced the constants $\beta_s^2$ and $\sigma_s^2$, which are the sums of $\beta_{s,p}^2$ and $\sigma_{s,p}^2$, respectively.
Then, going back:
\begin{align}
	\mathbb{E}\Vert \bm{\bar{\ws}}_{i} \Vert ^2 &\leq (1-\sigma_{11}\mu + \kappa_1^2\kappa_2^2 \beta_s^2\mu^2) \mathbb{E}\Vert \bm{\bar{\ws}}_{i-1} \Vert ^2 
	\notag \\
	&\quad + \left( \frac{\sigma_{21}^2\mu}{\sigma_{11}} + \kappa_1^2\kappa_2^2 \mu^2 \right) \mathbb{E} \Vert \bm{\check{\ws}}_{i-1} \Vert^2 + \kappa_1^2\mu^2 \sigma_s^2 
	\notag \\
	&\quad +\kappa_1^2\sum_{p,m=1}^P \beta_{s,p}^2\mu^2 \mathbb{E}\Vert \bm{g}_{1,mp,i}\Vert^2\notag \\
	&\quad + 2\mu^2 \mathbb{E} \Vert (q^\tran\otimes I_M) \mathcal{A}_2^\tran \bm{\mathcal{H}}_{i-1} \text{diag}( \mathcal{A}_1^\tran \bm{\mathcal{G}}_{1,i})\Vert^2 
	 \notag \\
	&\quad + 2\mathbb{E}\Vert  (q^\tran\otimes I_M) \mathcal{A}_2^\tran \mathcal{A}_0^\tran \text{diag}(\mathcal{A}_1^\tran \bm{\mathcal{G}}_{1,i})\Vert^2 
	\notag \\
	&\quad + \mathbb{E} \Vert (q^\tran\otimes I_M) \text{diag}( \mathcal{A}_2^\tran \bm{\mathcal{G}}_{2,i})\Vert^2  
	\notag \\
	&\quad + \mathbb{E}\Vert  (q^\tran\otimes I_M) \mathcal{A}_2^\tran \text{diag} (\mathcal{A}_0^\tran \bm{\mathcal{G}}_{0,i})\Vert^2   , 
\end{align}
and:
\begin{align}
	\mathbb{E}\Vert \bm{\check{\ws}}_i\Vert^2 & \leq \left(\rho(J_{\theta}) + \theta + \frac{3\sigma_{22}^2\mu^2}{1- \rho(J_{\theta}) - \theta}  + \kappa_1^2\kappa_2^2 \beta_s^2 \mu^2\right) \mathbb{E}\Vert \bm{\check{\ws}}_{i-1}\Vert^2 \notag \\
	&\quad + \left(  \frac{3\sigma_{12}^2\mu^2}{1- \rho(J_{\theta}) - \theta} + \kappa_1^2 \kappa_2^2 \beta_s^2 \mu^2 \right)  \mathbb{E}\Vert \bm{\bar{\ws}}_{i-1} \Vert ^2 
	\notag \\
	&\quad + \frac{3\Vert \check{b}\Vert^2}{1-\rho(J_{\theta})-\theta} + \kappa_1^2\mu^2 \sigma_s^2 
	\notag \\
	&\quad 
	+ \kappa_1^2\mu^2 \sum_{p,m=1}^P\beta_{s,p}^2 \mathbb{E}\Vert \bm{g}_{1,mp,i}\Vert^2
	 \notag \\
	&\quad + 2 \mu^2 \mathbb{E} \Vert (V_R^\tran \otimes I_M) \mathcal{A}_2^\tran \bm{\mathcal{H}}_{i-1} \text{diag} (\mathcal{A}_1^\tran \bm{\mathcal{G}}_{1,i})\Vert ^2 
	\notag \\
	&\quad + \mathbb{E}\Vert  J_{\theta}^\tran(V_R^\tran\otimes I_M)  \mathcal{A}_2^\tran \mathcal{A}_0^\tran  \text{diag} (\mathcal{A}_1^\tran  \bm{\mathcal{G}}_{1,i})\Vert^2 
	\notag \\
	& \quad + \mathbb{E} \Vert (V_R^\tran\otimes I_M) \text{diag} (\mathcal{A}_2^\tran \bm{\mathcal{G}}_{2,i})\Vert^2 
	\notag \\
	&\quad + \mathbb{E}\Vert  (V_R^\tran\otimes I_M) \mathcal{A}_2^\tran \text{diag} (\mathcal{A}_0^\tran \bm{\mathcal{G}}_{0,i}) \Vert^2 .
\end{align}
Adding the two bounds:
\begin{align}
	\mathbb{E}\Vert \wse_i\Vert^2 &\leq \kappa_2^2 \bigg( \bar{\gamma} \mathbb{E}\Vert \bm{\bar{\ws}}_{i-1}\Vert^2 + \check{\gamma}\mathbb{E}\Vert \bm{\check{\ws}}_{i-1}\Vert^2 + \frac{3\Vert \check{b}\Vert^2}{1-\rho(J_{\theta}) - \theta } \notag \\
	& \quad  + 2\kappa_1^2 \mu^2 \sigma_s^2 + 2\kappa_1^2\mu^2\sum_{p,m=1}^P \beta_{s,p}^2 \mathbb{E} \Vert \bm{g}_{1,mp,i}\Vert^2 
	\notag \\
	&\quad + 2\mu^2 \mathbb{E}\Vert \mathcal{V}_{\theta}^\tran \mathcal{A}_2^\tran \bm{\mathcal{H}}_{i-1} \text{diag} (\mathcal{A}_1^\tran \bm{\mathcal{G}}_{1,i}) \Vert^2 
	\notag \\
	&\quad + \mathbb{E}\Vert \mathcal{V}_{\theta}^\tran\mathcal{A}_2^\tran \mathcal{A}_{0}^\tran \text{diag} (\mathcal{A}_1^\tran \bm{\mathcal{G}}_{1,i})\Vert^2  
	\notag \\
	& \quad + \mathbb{E}\Vert \mathcal{V}_{\theta}^\tran \text{diag }(\mathcal{A}_2^\tran \bm{\mathcal{G}}_{2,i} )\Vert^2 
	\notag \\
	&\quad
	+ \mathbb{E}\Vert \mathcal{V}_{\theta}^\tran \mathcal{A}_2^\tran \text{diag} (\mathcal{A}_0^\tran \bm{\mathcal{G}}_{0,i} ) \Vert^2 \bigg) \notag \\
	&\leq \kappa_2^2 \bigg(  \bar{\gamma} \mathbb{E}\Vert \bm{\bar{\ws}}_{i-1}\Vert^2 + \check{\gamma}\mathbb{E}\Vert \bm{\check{\ws}}_{i-1}\Vert^2 + \frac{3\Vert \check{b}\Vert^2}{1-\rho(J_{\theta}) - \theta } \notag \\
		& \quad 
	+ 2\kappa_1^2 \mu^2 \sigma_s^2 +2\kappa_1^2\mu^2 \sum_{p,m=1}^P\beta_{s,p}^2 \mathbb{E}\Vert \bm{g}_{1,mp,i}\Vert^2  
	\notag \\
	&\quad + \kappa_1^2(1+2\delta^2\mu^2)\mathbb{E}\Vert \text{diag}(\mathcal{A}_1^\tran \bm{\mathcal{G}}_{1,i}) \Vert^2 
	\notag \\
	&\quad + \kappa_1^2 \mathbb{E}\Vert \text{diag} (\mathcal{A}_2^\tran\bm{\mathcal{G}}_{2,i})\Vert^2 
	\notag \\
	&\quad + \kappa_1^2 \mathbb{E}\Vert\text{diag}(\mathcal{A}_0^\tran \bm{\mathcal{G}}_{0,i}) \Vert^2 \bigg).
\end{align}
Then, recursively bounding the MSE and taking the limit as $i$ tends to infinity, we get:
\begin{align}
	& \limsup_{i\to \infty} \mathbb{E}\Vert \wse_i\Vert^2 \notag \\
	&\leq \kappa_2^2 \mathds{1}^\tran(I  - \Gamma)^{-1}  
	\notag \\
	& \times \begin{bmatrix}
		\kappa_1^2\mu^2 \sigma_s^2 + \left(4+(2\delta^2+P\beta_s^2)\mu^2\right)\sigma_g^2 \\
		\frac{3\Vert \check{b}\Vert^2}{1-\rho(J_{\theta})-\theta} + \kappa_1^2 \mu^2 \sigma_s^2 +  \kappa_1^2 \left( 3+ (2\delta^2 + P\beta_s^2)\mu^2  \right)\sigma_g^2
	\end{bmatrix} \notag \\
&= O(\mu)\sigma_s^2 + O(\mu) + (O(\mu^{-1})+O(\mu))\sigma_{g}^2 ,
\end{align}
where:
\begin{equation}
	\Gamma \eqdef \begin{bmatrix}
		\bar{\gamma} & \frac{\sigma_{21}^2\mu}{\sigma_{11}} + \kappa_1^2\kappa_2^2 \mu^2 \\
		\frac{3\sigma_{12}^2\mu^2}{1-\rho(J_{\theta}) - \theta} + \kappa_1^2\kappa_2^2 \beta_s^2 \mu^2 & \check{\gamma} 
	\end{bmatrix}.
\end{equation}

\section{Proof of Lemma \ref{lem:netDis}}\label{app:netDis}

We define $\ws_{c,i} \eqdef (q\mathds{1}^\tran \otimes I) \ws_{i}$ and write:
\begin{align}
	\ws_i - \ws_{c,i} &= \left(I - q\mathds{1}^\tran \otimes I \right)  \ws_{i} 
	\notag \\
	&= (V_L^\tran \otimes I)(V_R \otimes I) \ws_{i}
	\notag \\
	&= (V_L^\tran\otimes I ) J_{\theta} (V_R \otimes I) \ws_{i-1} 
	\notag \\
	&\quad - \mu (V_L^\tran \otimes I)(V_R \otimes I) \col{\grad{w}J_p(\bm{\phi}_{p,i-1})}
	\notag \\
	&\quad - \mu (V_L^\tran \otimes I)(V_R \otimes I) \bm{s}_{i} 
	\notag \\
	&\quad + (V_L^\tran \otimes I)(V_R \otimes I) \big( \text{diag}(\mathcal{A}_2^\tran\bm{\mathcal{G}}_{2,i}) 
	\notag \\
	&\quad + \mathcal{A}_2^\tran \text{diag}(\mathcal{A}_0^\tran\bm{\mathcal{G}}_{0,i}) + \mathcal{A}_2^\tran \mathcal{A}_0^\tran \text{diag}(\mathcal{A}_1^\tran \bm{\mathcal{G}}_{1,i}) \big).
\end{align}
We bound $\mathbb{E}\Vert (V_R \otimes I)\ws_i\Vert^2$ by using Jensen's inequality with a constant $\rho(J_{\theta})< 1$ and define $\kappa_1^2 = \Vert \mathcal{V}_{\theta}\Vert^2$ and $\kappa_2^2 = \Vert \mathcal{V}_{\theta}^{-1}\Vert^2$:
\begin{align}\label{eq:lem1PfMainIneq}
	&\mathbb{E}\Vert (V_R \otimes I)\ws_i\Vert^2 
	\notag \\
	&\leq \rho(J_{\theta})\mathbb{E}\Vert  (V_R \otimes I)\ws_{i-1}\Vert^2 
	\notag \\
	&\quad + \frac{ \kappa_1^2 \kappa_2^2 \mu^2}{1-\rho( J_{\theta})}  \sum_{p=1}^P \mathbb{E}\Vert \bm{H}_{p,i-1}\widetilde{\bm{\phi}}_{p,i-1} + \grad{w}J_p(w^o)\Vert^2 
	\notag \\
	&\quad +\kappa_1^2 \kappa_2^2\mu^2  \sum_{p=1}^P \mathbb{E}\Vert \bm{s}_{p,i}\Vert^2 +   v_1^2 v_2^2  \big( \mathbb{E}\Vert \text{diag}(\mathcal{A}_2^\tran\bm{\mathcal{G}}_{2,i}) \Vert^2
	\notag \\
	&\quad + \mathbb{E}\Vert\mathcal{A}_2^\tran \text{diag}(\mathcal{A}_0^\tran\bm{\mathcal{G}}_{0,i}) \Vert^2 + \mathbb{E}\Vert\mathcal{A}_2^\tran \mathcal{A}_0^\tran \text{diag}(\mathcal{A}_1^\tran \bm{\mathcal{G}}_{1,i})\Vert^2 \big)
	\notag \\
	&\leq \rho(J_{\theta})\mathbb{E}\Vert  (V_R \otimes I)\ws_{i-1}\Vert^2 + \frac{2\kappa_1^2\kappa_2^2\mu^2 \Vert b\Vert^2 }{1-\rho(J_{\theta})}
	\notag \\
	&\quad + \kappa_1^2\kappa_2^2\mu^2 \sum_{p=1}^P \beta_{s,p}^2 \sum_{m\in \mathcal{N}_p}a_{1,mp}\mathbb{E}\Vert \we_{m,i-1}\Vert^2 
	\notag \\
	&\quad 	+ \kappa_1^2\kappa_2^2\mu^2 \sum_{p,m=1}^P a_{1,mp}^2 \sigma_{g}^2 + \kappa_1^2\kappa_2^2 \mu^2\sigma_s^2 
	\notag \\
	&\quad+ \kappa_1^2\kappa_2^2 \sum_{p,m=1}^P( a_{2,mp}^2 + a_{0,mp}^2  + a_{1,mp}^2 ) \sigma_{g}^2 .
\end{align}
Then, the individual errors $\mathbb{E}\Vert \we_{m,i-1}\Vert^2 $ can be bounded as shown in Theorem \ref{thrm:MSEpriv}:
\begin{align}
	\mathbb{E}\Vert \we_{m,i-1}\Vert^2 &\leq \mathbb{E}\Vert \wse_{i-1}\Vert^2 
	\notag \\
	&\leq \kappa_2^2 \mathds{1}^\tran \left( \Gamma^{i-1} \begin{bmatrix}
		\mathbb{E}\Vert \bar{\ws}_{0}\Vert^2
		\\ \mathbb{E}\Vert \check{\ws}_{0}\Vert^2
	\end{bmatrix}  \right.
	\notag \\
	&\quad \left. + (I -\Gamma)^{-1}(I-\Gamma^{i-1}) \begin{bmatrix}
		O(\mu^2) + O(1)
		\\ O(\mu^2) + O(1)
	\end{bmatrix}
	\right),
\end{align}
where the $O(\mu^2)$ and $O(1)$ terms are constants depending on the gradient noise variance, the bias term $b$, and the noise variance. Also, the matrix $\Gamma$ captures the rate of the recursion and was previously defined in Appendix \ref{app:MSEpriv}.

Then, we plug back this bound into the main inequality \eqref{eq:lem1PfMainIneq} and recursively bound over $i$. The network disagreement is then bounded as:
\begin{align}
	\frac{1}{P}\sum_{p=1}^P \mathbb{E}\Vert \w_{p,i} - \w_{c,i}\Vert^2 &\leq \frac{\kappa_2^2}{P} \mathbb{E}\Vert (V_R \otimes I)\ws_i\Vert^2,
\end{align}
and in the limit:
\begin{align}
	&\limsup_{i\to\infty} \frac{1}{P}\sum_{p=1}^P \mathbb{E}\Vert \w_{p,i} - \w_{c,i}\Vert^2  
	\notag \\
	&\leq \frac{2\kappa_1^2 \kappa_2^4 \mu^2 \Vert b\Vert^2 }{P(1-\rho(J_{\theta}))^2}  +\frac{ \kappa_1^2\kappa_2^4}{P(1-\rho(J_{\theta}))} \sigma_{g}^2  + O(\mu)
	\notag \\
	&\quad + \frac{\kappa_1^2\kappa_2^4}{P(1-\rho(J_{\theta}))} \left(O(\mu^2) \sigma_s^2  + O(\mu^2)\sigma_{g}^2  \right).
\end{align}

\section{Proof of Theorem \ref{thrm:MSE-netCent}}\label{app:thrmMSE-netCent}
Starting from \eqref{eq:netCent-err} and taking the conditional mean of the squared Euclidean norm over the past models, we can split the norm into three independent terms: the model error, the gradient noise, and the added noise. Taking again expectations and using Jensen's with $\alpha = \sqrt{1-2\nu\mu + \delta^2\mu^2}$, we have:
\begin{align}\label{eq:thrm2PfMain}
	&\mathbb{E}\Vert \we_{c,i}\Vert^2 
	\notag \\
&	\leq \alpha 
	\mathbb{E} \Vert \we_{c,i-1}  \Vert^2 + \mu^2 \mathbb{E} \left\Vert (q^\tran \otimes I)\bm{s}_i \right\Vert^2 
	\notag \\
	&\quad
	+ \frac{\mu^2}{1-\alpha} \mathbb{E}\left \Vert \sum_{p=1}^P q_p \bm{H}_{p,i-1}\sum_{m\in \mathcal{N}_p} a_{1,mp} (\w_{m,i-1} - \w_{c,i-1}) \right\Vert^2
	\notag \\ 
	&\quad
	+ \mu^2 \mathbb{E} \left \Vert \sum_{p=1}^P q_p \bm{H}_{p,i-1} \sum_{m\in \mathcal{N}_p} a_{1,mp}\bm{g}_{1,mp,i} \right\Vert^2 
	\notag \\
	&\leq \alpha 	\mathbb{E} \Vert \we_{c,i-1}  \Vert^2 + \mu^2 \mathbb{E} \left\Vert (q^\tran \otimes I)\bm{s}_i \right\Vert^2  
	\notag \\
	&\quad + \frac{\delta^2 \mu^2}{1-\alpha} \sum_{p=1}^P \mathbb{E} \Vert \w_{p,i-1} - \w_{c,i-1}\Vert^2
		\notag \\ 
	&\quad
	+ \mu^2 \mathbb{E} \left \Vert \sum_{p=1}^P q_p \bm{H}_{p,i-1} \sum_{m\in \mathcal{N}_p} a_{1,mp}\bm{g}_{1,mp,i} \right\Vert^2 .
\end{align}
We bound the gradient noise by starting from \eqref{eq:gradNoiseBD} and using Jensen's inequality to introduce $\we_{c,i-1}$:
\begin{align}
	&\mathbb{E}\left\Vert  (q^\tran \otimes I) \bm{s}_i^2 \right\Vert^2 
	\notag \\
	&=  \sum_{p=1}^P q_p^2 \mathbb{E}\Vert \bm{s}_{p,i}\Vert^2
	\notag \\
	&\leq \sum_{p=1}^P q_p^2 \beta_{s,p}^2 \mathbb{E}  \Vert \widetilde{\bm{\phi}}_{p,i-1}\Vert^2 + q_p^2 \sigma_{s,p}^2 
	\notag \\
	&\leq \sum_{p=1}^P q_p^2 \beta_{s,p}^2 \sum_{m \in \mathcal{N}_p} a_{1,mp} \mathbb{E}\Vert \we_{m,i-1}\Vert^2  
 + a_{1,mp}\mathbb{E}\Vert \bm{g}_{1,mp,i}\Vert^2  
 	\notag \\
 	&\quad + \sigma_s^2
	\notag \\
	&\leq 2 \beta_{s}^2 \mathbb{E}\Vert \we_{c,i-1}\Vert^2  + \sigma_s^2 + \beta_s^2\sigma_{g}^2
	\notag \\
	& \quad+ 2 \sum_{p=1}^P q_p^2 \beta_{s,p}^2 \sum_{m\in \mathcal{N}_p} a_{1,mp}\mathbb{E}\Vert \w_{m,i-1} - \w_{c,i-1}\Vert^2  	
	\notag \\
	&\leq 2\beta_s^2 \mathbb{E}\Vert \we_{c,i-1}\Vert^2 + \sigma_s^2 + \beta_s^2 \sigma_g^2 
	\notag \\
	&\quad + 2\beta_s^2 \sum_{p=1}^P \mathbb{E}\Vert \w_{p,i-1} - \w_{c,i-1}\Vert^2.
\end{align}
The noise term can be bounded as follows by using twice Jensen's inequality:
\begin{align}
	\mathbb{E}\left\Vert \sum_{p=1}^P q_p \bm{H}_{p,i-1} \sum_{m\in \mathcal{N}_p} a_{1,mp}\bm{g}_{1,mp,i} \right\Vert^2 &\leq \delta^2 \sigma_g^2.
\end{align} 
We plug the bounds on the gradient noise and the added privacy noise in \eqref{eq:thrm2PfMain}:
\begin{align}
	\mathbb{E}\Vert \we_{c,i}\Vert^2 &\leq 
	 \left(\alpha + 2\beta_s^2\mu^2 \right)\mathbb{E}\Vert \we_{c,i-1}\Vert^2 + \mu^2\sigma_s^2 + (\beta_s^2+\delta^2) \mu^2\sigma_{g}^2 
	\notag \\
	&\quad + \left(2\beta_s^2+ \frac{\delta^2}{1-\alpha} \right)\mu^2\sum_{p=1}^P \mathbb{E}\Vert \w_{p,i-1}-\w_{c,i-1}\Vert^2. 
\end{align}
We use the bound from Lemma \ref{lem:netDis}. Recursively bounding the second-order moment of the error and taking the limit:
\begin{align}
	\limsup_{i \to \infty} \mathbb{E}\Vert \we_{c,i}\Vert^2 \leq& \frac{\mu^2\left(\sigma_s^2 + (\beta_s^2 + \delta^2)\sigma_g^2\right)}{1-\gamma_c} + O(\mu^2) 
	\notag \\
	& + \frac{\mu}{1-\gamma_c}\left( 2\beta_s^2 + \frac{\delta^2}{1-\alpha} \right) \frac{\kappa_1^2\kappa_2^4}{1-\rho(J_{\theta})} \sigma_g^2 
	\notag \\
	&= O(\mu)\sigma_s^2 + O(1)\sigma_g^2 + O(\mu^2).
\end{align}

\section{Proof of Theorem \ref{thrm:priv}}\label{app:thrm-priv}
It suffices to show the noise generated from the local graph-homomorphic process is Laplacian since we already know that adding Laplacian noise makes the algorithm differentially private (see \cite{dwork2014algorithmic,vlaski2020graphhomomorphic}) with high probability. Thus, it is well known that the product of a uniform random variable $U(0,1)$ with a gamma random variable $\Gamma(2,1)$ results in an exponential random variable Exp$(1)$ \cite{johnson1995continuous}. Then $e^{-\bm{v}_{\ell}\bm{v}_{m}}$ is uniformly distributed on $[0,1]$:
\begin{align}
	\mathbb{P}(e^{-\bm{v}_{\ell}\bm{v}_{m}} \leq c) &= \mathbb{P}(\bm{v}_{\ell}\bm{v}_{m} \geq -\ln c) = e^{\ln c} = c.
\end{align}
But multiplying it by $a$ makes the resulting variable uniformly distributed on $[0,a]$. The modulo $p$ of a uniform random variable is uniform on $[0,\pi]$ so long as $a$ is a multiple of $\pi$. Let $a = t\pi$ for some integer $t$ and $\bm{x} \sim \text{U}(0,a)$. We divide the intervel $[0,a]$ into $t$ disjoint sub-intervals of length $\pi$, $[0,a] = [0,1) \cup [1,2) \cdots \cup [(t-1)\pi, a]$. On each of these sub-intervals $[i\pi, (i+1)\pi)$, $\bm{x}$ is uniformly distributed $\mathbb{P} \big(\bm{x} \leq x | \bm{x} \in [i\pi, (i+1)\pi) \big) = x$, and so will $\bm{x} \mod \pi = \bm{x} - \lfloor \bm{x}/\pi \rfloor = \bm{x} - i\pi$ on $[0,\pi]$. Thus since $a = t\pi$ we get:
\begin{align}
	\mathbb{P}(\bm{x} \leq x) &= \sum_{i=0}^{t-1}\mathbb{P} \big(\bm{x} \leq x | \bm{x} \in [i\pi, (i+1)\pi) \big) \notag \\
	&\qquad \times  \mathbb{P} \big(\bm{x} \in [i\pi, (i+1)\pi) \big) \notag \\
	&= \sum_{i=0}^{t-1} x \frac{\pi}{a} = x.
\end{align}
This now means that $\bm{v}_{\ell m} \sim \text{U}(0,\pi)$. Then, taking the difference of two exponential random variables results in a Laplacian. Thus, we require to transform two uniform random variables to two exponential random variables with parameter $\frac{\sigma_g}{\sqrt{2}}$. Taking $-\frac{\sqrt{2}}{\sigma_g} \ln \bm{v}_{\ell m}$ results in an exponential random variable: 
\begin{align}
	\mathbb{P} \left(-\frac{\sqrt{2}}{\sigma_g}\ln(\bm{v}_{\ell m}) \leq c \right) &= \mathbb{P}\left(\bm{v}_{\ell m} \geq e^{-\frac{c \sigma_g} { \sqrt{2}}} \right ) = 1 - e^{-\frac{ \sigma_g c} {\sqrt{2}}}.
\end{align}  

%
%


\begin{thebibliography}{10}
	\providecommand{\url}[1]{#1}
	\csname url@samestyle\endcsname
	\providecommand{\newblock}{\relax}
	\providecommand{\bibinfo}[2]{#2}
	\providecommand{\BIBentrySTDinterwordspacing}{\spaceskip=0pt\relax}
	\providecommand{\BIBentryALTinterwordstretchfactor}{4}
	\providecommand{\BIBentryALTinterwordspacing}{\spaceskip=\fontdimen2\font plus
		\BIBentryALTinterwordstretchfactor\fontdimen3\font minus
		\fontdimen4\font\relax}
	\providecommand{\BIBforeignlanguage}[2]{{%
			\expandafter\ifx\csname l@#1\endcsname\relax
			\typeout{** WARNING: IEEEtran.bst: No hyphenation pattern has been}%
			\typeout{** loaded for the language `#1'. Using the pattern for}%
			\typeout{** the default language instead.}%
			\else
			\language=\csname l@#1\endcsname
			\fi
			#2}}
	\providecommand{\BIBdecl}{\relax}
	\BIBdecl
	
	\bibitem{Bert1997-inc}
	D.~Bertsekas, ``A new class of incremental gradient methods for least squares
	problems,'' \emph{SIAM J. Optim.}, vol.~7, pp. 913--926, Nov. 1997.
	
	\bibitem{Blatt2007ACI}
	D.~Blatt, A.~O. Hero, and H.~Gauchman, ``A convergent incremental gradient
	method with a constant step size,'' \emph{SIAM J. Optim.}, vol.~18, pp.
	29--51, 2007.
	
	\bibitem{Catt2011-inc}
	F.~S. Cattivelli and A.~H. Sayed, ``Analysis of spatial and incremental lms
	processing for distributed estimation,'' \emph{IEEE Transactions on Signal
		Processing}, vol.~59, no.~4, pp. 1465--1480, 2011.
	
	\bibitem{lopes2007incremental}
	C.~G. Lopes and A.~H. Sayed, ``Incremental adaptive strategies over distributed
	networks,'' \emph{IEEE Transactions on Signal Processing}, vol.~55, no.~8,
	pp. 4064--4077, 2007.
	
	\bibitem{John2009-inc}
	B.~Johansson, M.~Rabi, and M.~Johansson, ``A randomized incremental subgradient
	method for distributed optimization in networked systems,'' \emph{SIAM J.
		Optim.}, vol.~20, pp. 1157--1170, Jan 2009.
	
	\bibitem{Elias09-inc}
	E.~S. Helou and A.~R. De~Pierro, ``Incremental subgradients for constrained
	convex optimization: A unified framework and new methods,'' \emph{SIAM J.
		Optim.}, vol.~20, no.~3, p. 1547–1572, Dec 2009.
	
	\bibitem{rabbat2005quantized}
	M.~G. Rabbat and R.~D. Nowak, ``Quantized incremental algorithms for
	distributed optimization,'' \emph{IEEE Journal on Selected Areas in
		Communications}, vol.~23, no.~4, pp. 798--808, 2005.
	
	\bibitem{Nedic01-inc}
	A.~Nedic and D.~P. Bertsekas, ``Incremental subgradient methods for
	nondifferentiable optimization,'' \emph{SIAM Journal on Optimization},
	vol.~12, no.~1, pp. 109--138, 2001.
	
	\bibitem{DeGroot}
	M.~H. DeGroot, ``Reaching a consensus,'' \emph{Journal of the American
		Statistical Association.}, vol.~69, no. 345, pp. 118--121, 1974.
	
	\bibitem{Nedic09}
	A.~Nedic and A.~Ozdaglar, ``Distributed subgradient methods for multi-agent
	optimization,'' \emph{IEEE Transactions on Automatic Control}, vol.~54,
	no.~1, pp. 48--61, 2009.
	
	\bibitem{xiao2004fast}
	L.~Xiao and S.~Boyd, ``Fast linear iterations for distributed averaging,''
	\emph{Systems \& Control Letters}, vol.~53, no.~1, pp. 65--78, Sep 2004.
	
	\bibitem{Barbarossa07}
	S.~Barbarossa and G.~Scutari, ``Bio-inspired sensor network design,''
	\emph{IEEE Signal Processing Magazine}, vol.~24, no.~3, pp. 26--35, May 2007.
	
	\bibitem{Johansson2008SubgradientMA}
	B.~Johansson, T.~Keviczky, M.~Johansson, and K.~H. Johansson, ``Subgradient
	methods and consensus algorithms for solving convex optimization problems,''
	in \emph{Proc. IEEE Conf. Dec. Control (CDC)}, Cancun, Mexico, December 2008,
	pp. 4185--4190.
	
	\bibitem{Ren05}
	W.~Ren and R.~Beard, ``Consensus seeking in multiagent systems under
	dynamically changing interaction topologies,'' \emph{IEEE Transactions on
		Automatic Control}, vol.~50, no.~5, pp. 655--661, 2005.
	
	\bibitem{Boyd06}
	S.~Boyd, A.~Ghosh, B.~Prabhakar, and D.~Shah, ``Randomized gossip algorithms,''
	\emph{IEEE Transactions on Information Theory}, vol.~52, no.~6, pp.
	2508--2530, Jun 2006.
	
	\bibitem{Kar09}
	S.~Kar and J.~M.~F. Moura, ``Distributed consensus algorithms in sensor
	networks with imperfect communication: Link failures and channel noise,''
	\emph{IEEE Transactions on Signal Processing}, vol.~57, no.~1, pp. 355--369,
	Jan 2009.
	
	\bibitem{srivastava2011distributed}
	K.~Srivastava and A.~Nedic, ``Distributed asynchronous constrained stochastic
	optimization,'' \emph{IEEE Journal of Selected Topics in Signal Processing},
	vol.~5, no.~4, pp. 772--790, Aug 2011.
	
	\bibitem{hlinka2012likelihood}
	O.~Hlinka, O.~Slu{\v{c}}iak, F.~Hlawatsch, P.~M. Djuri{\'c}, and M.~Rupp,
	``Likelihood consensus and its application to distributed particle
	filtering,'' \emph{IEEE Transactions on Signal Processing}, vol.~60, no.~8,
	pp. 4334--4349, Aug 2012.
	
	\bibitem{chen2012diffusion}
	J.~Chen and A.~H. Sayed, ``Diffusion adaptation strategies for distributed
	optimization and learning over networks,'' \emph{IEEE Transactions on Signal
		Processing}, vol.~60, no.~8, pp. 4289--4305, Aug 2012.
	
	\bibitem{Tu12}
	S.-Y. Tu and A.~H. Sayed, ``Diffusion strategies outperform consensus
	strategies for distributed estimation over adaptive networks,'' \emph{IEEE
		Transactions on Signal Processing}, vol.~60, no.~12, pp. 6217--6234, Dec
	2012.
	
	\bibitem{chen2015learning}
	J.~Chen and A.~H. Sayed, ``On the learning behavior of adaptive networks—part
	i: Transient analysis,'' \emph{IEEE Transactions on Information Theory},
	vol.~61, no.~6, pp. 3487--3517, Dec 2015.
	
	\bibitem{Chou12}
	S.~Chouvardas, K.~Slavakis, Y.~Kopsinis, and S.~Theodoridis, ``A sparsity
	promoting adaptive algorithm for distributed learning,'' \emph{IEEE
		Transactions on Signal Processing}, vol.~60, no.~10, pp. 5412--5425, 2012.
	
	\bibitem{Vlaski21}
	S.~Vlaski and A.~H. Sayed, ``Distributed learning in non-convex
	environments—part {I}: Agreement at a linear rate,'' \emph{IEEE
		Transactions on Signal Processing}, vol.~69, pp. 1242--1256, 2021.
	
	\bibitem{Lopes08}
	C.~G. Lopes and A.~H. Sayed, ``Diffusion least-mean squares over adaptive
	networks: Formulation and performance analysis,'' \emph{IEEE Transactions on
		Signal Processing}, vol.~56, no.~7, pp. 3122--3136, Jul 2008.
	
	\bibitem{sayed2014adaptive}
	A.~H. Sayed, ``Adaptive networks,'' \emph{Proceedings of the IEEE}, vol. 102,
	no.~4, pp. 460--497, 2014.
	
	\bibitem{sayed2014adaptation}
	------, ``Adaptation, learning, and optimization over networks,''
	\emph{Foundations and Trends in Machine Learning}, vol.~7, no. 4-5, pp.
	311--801, 2014.
	
	\bibitem{Hitaj2017}
	B.~Hitaj, G.~Ateniese, and F.~Perez-Cruz, ``Deep models under the gan:
	Information leakage from collaborative deep learning,'' in \emph{Proceedings
		of ACM SIGSAC Conference on Computer and Communications Security}, New York,
	NY, USA, 2017, p. 603–618.
	
	\bibitem{Melis2019}
	L.~{Melis}, C.~{Song}, E.~{De Cristofaro}, and V.~{Shmatikov}, ``Exploiting
	unintended feature leakage in collaborative learning,'' in \emph{IEEE
		Symposium on Security and Privacy}, San Francisco, CA, USA, May 2019, pp.
	691--706.
	
	\bibitem{nasr2019comprehensive}
	M.~Nasr, R.~Shokri, and A.~Houmansadr, ``Comprehensive privacy analysis of deep
	learning: Passive and active white-box inference attacks against centralized
	and federated learning,'' in \emph{IEEE Symposium on Security and Privacy},
	San Jose, CA, May 2019, pp. 739--753.
	
	\bibitem{Zhu2019}
	L.~Zhu and S.~Han, ``Deep leakage from gradients,'' in \emph{Advances in Neural
		Information Processing Systems}, Vancouver, Canada, Dec 2019, pp. 17--31.
	
	\bibitem{bonawitz2016practical}
	K.~Bonawitz, V.~Ivanov, B.~Kreuter, A.~Marcedone, H.~B. McMahan, S.~Patel,
	D.~Ramage, A.~Segal, and K.~Seth, ``Practical secure aggregation for
	privacy-preserving machine learning,'' in \emph{Proc. of ACM SIGSAC
		Conference on Computer and Communications Security}, New York, USA, 2017, p.
	1175–1191.
	
	\bibitem{gascon2017privacy}
	A.~Gasc{\'o}n, P.~Schoppmann, B.~Balle, M.~Raykova, J.~Doerner, S.~Zahur, and
	D.~Evans, ``Privacy-preserving distributed linear regression on
	high-dimensional data,'' \emph{Proceedings on Privacy Enhancing
		Technologies}, vol. 2017, no.~4, pp. 345--364, 2017.
	
	\bibitem{Mohassel2017SecureML}
	P.~{Mohassel} and Y.~{Zhang}, ``Secureml: A system for scalable
	privacy-preserving machine learning,'' in \emph{IEEE Symposium on Security
		and Privacy}, San Jose, CA, May 2017, pp. 19--38.
	
	\bibitem{froelicher2021scalable}
	D.~Froelicher, J.~R. Troncoso-Pastoriza, A.~Pyrgelis, S.~Sav, J.~S. Sousa,
	J.-P. Bossuat, and J.-P. Hubaux, ``Scalable privacy-preserving distributed
	learning,'' \emph{Proceedings on Privacy Enhancing Technologies}, vol. 2021,
	no.~2, pp. 323--347, 2021.
	
	\bibitem{Niko2013}
	V.~{Nikolaenko}, U.~{Weinsberg}, S.~{Ioannidis}, M.~{Joye}, D.~{Boneh}, and
	N.~{Taft}, ``Privacy-preserving ridge regression on hundreds of millions of
	records,'' in \emph{IEEE Symposium on Security and Privacy}, Berkeley, CAA,
	May 2013, pp. 334--348.
	
	\bibitem{dwork2014algorithmic}
	C.~Dwork and A.~Roth, ``The algorithmic foundations of differential privacy.''
	\emph{Found. Trends Theor. Comput. Sci.}, vol.~9, no. 3-4, pp. 211--407,
	2014.
	
	\bibitem{geyer2017differentially}
	R.~C. Geyer, T.~Klein, and M.~Nabi, ``Differentially private federated
	learning: A client level perspective,'' \emph{arXiv:1712.07557}, 2017.
	
	\bibitem{hu2020personalized}
	R.~Hu, Y.~Guo, H.~Li, Q.~Pei, and Y.~Gong, ``Personalized federated learning
	with differential privacy,'' \emph{IEEE Internet of Things Journal}, vol.~7,
	no.~10, pp. 9530--9539, 2020.
	
	\bibitem{triastcyn2019federated}
	A.~Triastcyn and B.~Faltings, ``Federated learning with bayesian differential
	privacy,'' in \emph{IEEE International Conference on Big Data}, Los Angeles,
	CA, Dec 2019, pp. 2587--2596.
	
	\bibitem{truex2020ldp}
	S.~Truex, L.~Liu, K.-H. Chow, M.~E. Gursoy, and W.~Wei, ``Ldp-fed: Federated
	learning with local differential privacy,'' in \emph{Proceedings of the Third
		ACM International Workshop on Edge Systems, Analytics and Networking},
	Heraklion, Greece, April 2020, pp. 61--66.
	
	\bibitem{wei2020federated}
	K.~Wei, J.~Li, M.~Ding, C.~Ma, H.~H. Yang, F.~Farokhi, S.~Jin, T.~Q. Quek, and
	H.~V. Poor, ``Federated learning with differential privacy: Algorithms and
	performance analysis,'' \emph{IEEE Transactions on Information Forensics and
		Security}, vol.~15, pp. 3454--3469, 2020.
	
	\bibitem{JayDLDP}
	B.~Jayaraman, L.~Wang, D.~Evans, and Q.~Gu, ``Distributed learning without
	distress: Privacy-preserving empirical risk minimization,'' in \emph{Advances
		in Neural Information Processing Systems}, vol.~31, Montreal, Canada, Dec
	2018.
	
	\bibitem{LiDLDP}
	C.~{Li}, P.~{Zhou}, L.~{Xiong}, Q.~{Wang}, and T.~{Wang}, ``Differentially
	private distributed online learning,'' \emph{IEEE Transactions on Knowledge
		and Data Engineering}, vol.~30, no.~8, pp. 1440--1453, 2018.
	
	\bibitem{ZhuDPDL}
	J.~{Zhu}, C.~{Xu}, J.~{Guan}, and D.~O. {Wu}, ``Differentially private
	distributed online algorithms over time-varying directed networks,''
	\emph{IEEE Transactions on Signal and Information Processing over Networks},
	vol.~4, no.~1, pp. 4--17, 2018.
	
	\bibitem{pathak2010DP}
	M.~A. Pathak, S.~Rane, and B.~Raj, ``Multiparty differential privacy via
	aggregation of locally trained classifiers.'' in \emph{Advances in Neural
		Information Processing Systems}, Vancouver, Canada, Dec 2010, pp. 1876--1884.
	
	\bibitem{vlaski2020graphhomomorphic}
	S.~Vlaski and A.~H. Sayed, ``Graph-homomorphic perturbations for private
	decentralized learning,'' in \emph{Proc. ICASSP}, Toronto, Canada, June 2021,
	pp. 5240--5244.
	
	\bibitem{ying2018performance}
	B.~Ying and A.~H. Sayed, ``Performance limits of stochastic sub-gradient
	learning, part ii: Multi-agent case,'' \emph{Signal Processing}, vol. 144,
	pp. 253--264, 2018.
	
	\bibitem{vlaski2021distributed}
	S.~Vlaski and A.~H. Sayed, ``Distributed learning in non-convex
	environments—part i: Agreement at a linear rate,'' \emph{IEEE Transactions
		on Signal Processing}, vol.~69, pp. 1242--1256, 2021.
	
	\bibitem{sayed_2022}
	A.~H. Sayed, \emph{Inference and Learning from Data}.\hskip 1em plus 0.5em
	minus 0.4em\relax Cambridge University Press, 2023.
	
	\bibitem{nassif2020multitask}
	R.~Nassif, S.~Vlaski, C.~Richard, J.~Chen, and A.~H. Sayed, ``Multitask
	learning over graphs: An approach for distributed, streaming machine
	learning,'' \emph{IEEE Signal Processing Magazine}, vol.~37, no.~3, pp.
	14--25, 2020.
	
	\bibitem{wang2019adaptive}
	S.~Wang, T.~Tuor, T.~Salonidis, K.~K. Leung, C.~Makaya, T.~He, and K.~Chan,
	``Adaptive federated learning in resource constrained edge computing
	systems,'' \emph{IEEE Journal on Selected Areas in Communications}, vol.~37,
	no.~6, pp. 1205--1221, 2019.
	
	\bibitem{Li2020On}
	X.~Li, K.~Huang, W.~Yang, S.~Wang, and Z.~Zhang, ``On the convergence of fedavg
	on non-iid data,'' in \emph{International Conference on Learning
		Representations}, Addis Ababa, Ethiopia, 2020, pp. 1--26.
	
	\bibitem{ginar_2019}
	\BIBentryALTinterwordspacing
	Ginar, ``A review of random number generator (rng) on blockchain,'' Dec 2019.
	[Online]. Available:
	\url{https://medium.com/ginar-io/a-review-of-random-number-generator-rng-on-blockchain-fe342d76261b}
	\BIBentrySTDinterwordspacing
	
	\bibitem{Poor}
	Y.~Wang and H.~V. Poor, ``Decentralized stochastic optimization with inherent
	privacy protection,'' \emph{IEEE Transactions on Automatic Control}, pp.
	1--16, 2022.
	
	\bibitem{avazu}
	\BIBentryALTinterwordspacing
	Avazu and Kaggle, ``Avazu's click-through rate prediction,'' 2014. [Online].
	Available: \url{http://www.csie.ntu.edu.tw/-cj1in/libsvmtools/}
	\BIBentrySTDinterwordspacing
	
	\bibitem{johnson1995continuous}
	N.~L. Johnson, S.~Kotz, and N.~Balakrishnan, \emph{Continuous Univariate
		Distributions}.\hskip 1em plus 0.5em minus 0.4em\relax Wiley, 1995.
	
\end{thebibliography}


\end{document}